\newtheorem{theorem}{Theorem}
\def\ie{{\textit{i}.\textit{e}.}}
\def\etal{{{et al}.}}
\begin{document}

\title{SEGA: A Transferable \emph{S}igned \emph{E}nsemble \emph{G}aussian Black-Box \emph{A}ttack against No-Reference Image Quality Assessment Models}

\author{Yujia Liu, Dingquan Li \IEEEmembership{Member, IEEE}, Zhixuan Li, 
        and 
        Tiejun Huang\textsuperscript{\Letter}, \IEEEmembership{Senior Member, IEEE}%
\thanks{Yujia Liu, and Tiejun Huang are with the School of Computer Science, the National Key Laboratory for Multimedia Information Processing, National Engineering Research Center of Visual Technology, Peking University, Beijing, China (e-mail: yujia\_liu@pku.edu.cn, tjhuang@pku.edu.cn). \emph{(Corresponding author:
Tiejun Huang.)}}
\thanks{Zhixuan Li is with the College of Computing and Data Science, Nanyang Technological University, Singapore (e-mail: zhixuanli520@gmail.com)}
\thanks{Dingquan Li is with the Department of Networked Intelligence, Pengcheng Laboratory, Shenzhen, China (e-mail: dingquanli@pku.edu.cn).}
}

\markboth{Preprint}%
{Yujia Liu \MakeLowercase{\textit{et al.}}: SEGA: A Transferable \underline{S}igned \underline{E}nsemble \underline{G}aussian Black-Box \underline{A}ttack against No-Reference Image Quality Assessment Models}

\maketitle

\begin{abstract}
No-Reference Image Quality Assessment (NR-IQA) models play an important role in various real-world applications. Recently, adversarial attacks against NR-IQA models have attracted increasing attention, as they provide valuable insights for revealing model vulnerabilities and guiding robust system design. Some effective attacks have been proposed against NR-IQA models in white-box settings, where the attacker has full access to the target model. However, these attacks often suffer from poor transferability to unknown target models in more realistic black-box scenarios, where the target model is inaccessible. This work makes the first attempt to address the challenge of low transferability in attacking NR-IQA models by proposing a transferable Signed Ensemble Gaussian black-box Attack (SEGA). The main idea is to approximate the gradient of the target model by applying Gaussian smoothing to source models and ensembling their smoothed gradients. To ensure the imperceptibility of adversarial perturbations, SEGA further removes inappropriate perturbations using a specially designed perturbation filter mask. Experimental results demonstrate the superior transferability of SEGA, validating its effectiveness in enabling successful transfer-based black-box attacks against NR-IQA models. Code for this paper is available at \url{https://github.com/YogaLYJ/SEGA_IQA}.
\end{abstract}

\section{Introduction}
Image Quality Assessment (IQA) models, which predict quality scores for input images, play an important role in various downstream tasks~\cite{2019_IEEE_QA_recommendation,face_IQA,brain_inspired_QA}. For instance, the predicted scores can serve as a key factor in recommendation systems~\cite{2019_IEEE_QA_recommendation}. In addition, they are often used to guide model optimization and performance evaluation in domains such as medical diagnosis~\cite{2023_medical} and autonomous driving~\cite{2023_autodrive,driving_QA}.
Based on the availability of reference images, IQA can be categorized into Full-Reference IQA (FR-IQA) and No-Reference IQA (NR-IQA). FR-IQA models assess the quality of distorted images with corresponding reference images~\cite{2018_CVPR_Richard_LPIPS,2020_TPAMI_Ding_DISTS}, whereas NR-IQA models predict quality scores without any reference~\cite{2022_CVPRw_MANIQA,2020_TCSVT_DBCNN}. This paper focuses on NR-IQA, which poses greater challenges and is commonly used in real-world applications~\cite{NR_challenging}.

With the widespread deployment of NR-IQA models in various downstream applications, adversarial attacks targeting these models have attracted increasing attention~\cite{2022_NIPS_Zhang_PAttack,2022_BMVC_Ekarerina_UAP,2024_TCSVT_SurFreeIQA}. Such attacks aim to mislead predicted quality scores by introducing imperceptible perturbations to input images, thereby exposing potential vulnerabilities in NR-IQA models that may pose serious security risks in real-world scenarios. For example, if an NR-IQA model assigns a high-quality score to a low-quality image under attack, it may mislead recommendation systems into promoting poor-quality content, resulting in a degraded user experience~\cite{2022_WACV_NR_application,2016_HVEI_QA_application}. In more critical domains such as healthcare, the misjudgment of low-quality medical images could severely compromise diagnostic outcomes~\cite{2024_America_QA_medical,medical_QA}. 
Therefore, investigating adversarial attacks against NR-IQA models is essential for identifying their weaknesses and guiding the development of more robust and trustworthy systems.

\begin{figure*}[!t]
    \centering
    \includegraphics[width=0.98\linewidth]{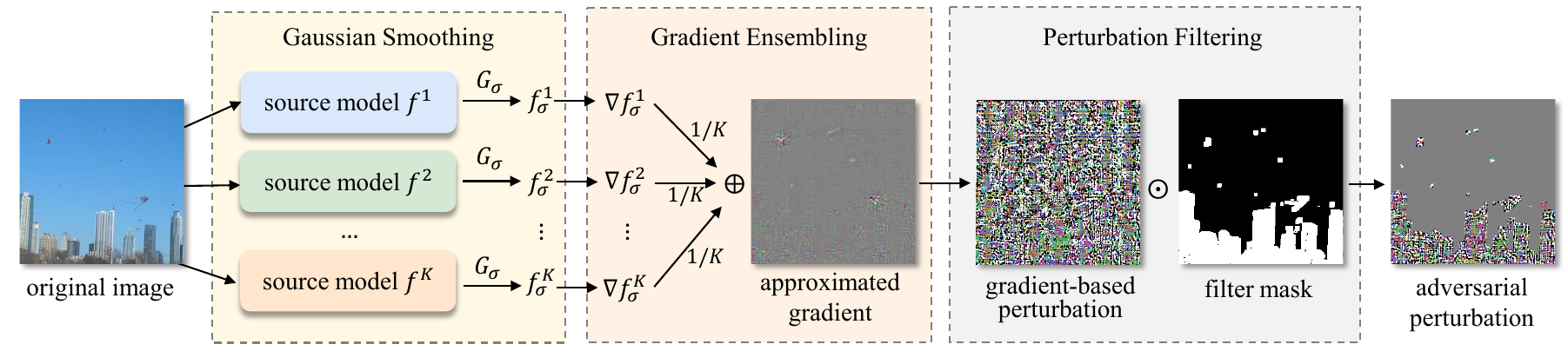}
    \caption{Overview of the proposed SEGA method. SEGA leverages ensembled gradients from multiple source models with Gaussian smoothing to approximate the gradient of the target model. To enhance the imperceptibility of adversarial perturbations, SEGA designs a perturbation filter mask to remove inappropriate perturbations.}
    \label{fig:intro_pipline}
\end{figure*}

Recent studies on attacking NR-IQA models have primarily focused on white-box scenarios, which assume full access to the target model. In such settings, as demonstrated in other tasks, adversarial example generation heavily relies on the input image's gradient~\cite{Segmentation_attack,ranking_attack}, since it indicates the direction that most rapidly alters prediction scores. As for the NR-IQA task, Shumitskaya~\etal~\cite{2024_CVIU_Ekaterina_OUAP} proposed a universal perturbation by aggregating the gradients across the entire test set with weighted summation, effectively increasing the predicted scores for most inputs. In contrast, Zhang~\etal~\cite{2022_NIPS_Zhang_PAttack} formulated an optimization problem for individual images with Lagrange multipliers and solved it using gradient descent methods, achieving significant errors in prediction consistency across a dataset.

However, research on attacking NR-IQA models in the more challenging black-box scenario remains limited. In this setting, adversarial perturbations are generated solely based on input-output pairs, without access to the model’s internal information~\cite{transfer_Papernot,black_box_benchmark}. Yang~\etal~\cite{2024_TCSVT_SurFreeIQA} proposed a query-based method that iteratively queries the target model and performs a binary search in the polar space to locate effective perturbations. While effective at inducing prediction errors, this approach is highly time-consuming due to the need for thousands of queries. 
An alternative strategy is transfer-based black-box attacks, where attackers generate adversarial examples using accessible source models and then transfer them to the target model~\cite{transfer_attack}. However, existing attacks on NR-IQA models have demonstrated limited transferability to unknown target models~\cite{2022_NIPS_Zhang_PAttack,2024_CVPR_Liu_NT}, and to date, no attack method with good transferability for NR-IQA tasks has been explored.

In this paper, we propose a black-box attack method against NR-IQA models that achieves strong transferability. The main idea is to approximate the gradients of the target model using accessible source models while maintaining the imperceptibility of adversarial perturbations. Existing white-box attacks~\cite{2022_NIPS_Zhang_PAttack,2024_CVIU_Ekaterina_OUAP} have highlighted the importance of gradients, which reveal the direction of the steepest change in prediction scores. Given that different NR-IQA models are designed to solve the same task, it is reasonable to assume that they share similar gradient behaviors. However, previous studies have shown that raw gradients often contain significant noise~\cite{2017_arxiv_Smilkov_SmoothGrad,2017_ICML_Balduzzi_NoisyGrad}, which may lead to inconsistencies between the gradients of the source and target models, and thereby reduce the transferability of adversarial examples. To address this, we aim to remove noise in the source model gradients to better approximate the target model’s gradient directions. 

In detail, we propose the Signed Ensemble Gaussian Attack (SEGA) method on NR-IQA models, which consists of three steps: Gaussian smoothing, gradient ensembling, and perturbation filtering (as shown in Fig.~\ref{fig:intro_pipline}). The first two steps aim to improve gradient approximation, while the final step enhances the imperceptibility of the resulting perturbations. Given an input image and a source model, we first apply Gaussian smoothing to the model and use the gradient of the smoothed output as an approximation. This is because smoothed gradients can effectively reduce noise in the original gradient~\cite{2017_arxiv_Smilkov_SmoothGrad}. Next, inspired by advances in transfer-based attacks for classification models~\cite{2018_CVPR_Dong_MI-FGSM}, we employ an ensemble of smoothed source models to further refine the gradient approximation and improve transferability. The adversarial perturbation is then generated by taking the sign of the approximated gradient scaled by a small step size. Finally, to improve the imperceptibility of adversarial perturbations, we design a perturbation filter mask that removes inappropriate perturbations while maintaining the attack effectiveness.

The main contributions of this work are as follows:
\begin{enumerate}
    \item Methodologically, we propose SEGA—the first transfer-based black-box attack method specifically designed for NR-IQA models. SEGA introduces Gaussian smoothing into the adversarial attack framework for the first time, enabling a more accurate approximation of target model gradients.
    \item Theoretically, we analyze the upper bound on the error between the approximated gradient obtained from source models and the true gradient of the target model, providing insights into the effectiveness of gradient smoothing and ensembling.
    \item Empirically, extensive experiments on the CLIVE dataset~\cite{2015_TIP_LIVEC} and KonIQ-10k dataset~\cite{koniq_10k} demonstrate that SEGA significantly enhances the transferability of adversarial examples across NR-IQA models, as evaluated by both prediction accuracy and score consistency.
\end{enumerate}

\section{Related Works}

\subsection{NR-IQA Models}
The task of NR-IQA is to predict human-perceived quality, typically measured by the Mean Opinion Score (MOS). This is achieved by analyzing a distorted image directly, without any reference images~\cite{NR_2012,Weixia_LwFKG,2023_Cao_NRIQA}.

Traditional NR-IQA metrics rely on manually designed features~\cite{ghadiyaram2017perceptual} intended to capture the characteristics of the Human Visual System (HVS). This approach is motivated by the assumption that distortions in the statistical regularities of natural images~\cite{simoncelli2001natural} often serve as a reasonable proxy for perceived visual quality. For example, NIQE~\cite{2012_SP_NIQE} and BRISQUE~\cite{2012_TIP_BRISQUE} are widely used metrics that assess image quality using natural scene statistics (NSS) in the spatial domain, while other methods extract NSS features from the transform domain~\cite{moorthy2011blind,saad2012blind}.

The advancement of neural networks has popularized deep learning-based approaches in NR-IQA, effectively addressing the challenges of large-scale datasets~\cite{Wang_gMAD,ying2020patches,zhu2020metaiqa} and limited human labels~\cite{Bosse_NR,Liu_RankIQA,Zhang_BIQA}. Research in this direction has explored both convolutional neural networks and transformer architectures~\cite{2020_CVPR_hyperIQA,2020_MM_LinearityIQA,Ke_2021_ICCV_MUSIQ}. For instance, DBCNN~\cite{2020_TCSVT_DBCNN} employs dual subnetworks to separately extract distortion-related and semantic features. In contrast, MANIQA~\cite{2022_CVPRw_MANIQA} utilizes the vision transformer~\cite{dosovitskiy2020vit} to enhance features through cross-channel interactions. More recently, Zhang~\etal~\cite{2023_CVPR_LIQE} used the CLIP model~\cite{radford2021learning} within a multitask learning framework, introducing auxiliary knowledge to boost BIQA performance.

\subsection{White-Box Attacks Against NR-IQA Models}
Several studies have explored white-box attacks on NR-IQA models and achieved impressive attack performance with the use of input image gradients. 

Several studies have directly adapted classical gradient-based adversarial attacks from image classification to the NR-IQA task. For example, Meftah~\etal~\cite{meftah2023evaluating} applied the well-known FGSM~\cite{2015_ICLR_Goodfellow_FGSM} and PGD~\cite{2018_ICLR_Madry_PGD} attacks to NR-IQA models, while Gushchin~\etal~\cite{gushchin2024adversarial} further explored variants of FGSM~\cite{2018_CVPR_Dong_MI-FGSM,2023_Sang_AMI_FGSM} in this context. 

Other studies have developed adversarial attacks specifically targeting NR-IQA tasks. Zhang~\etal~\cite{2022_NIPS_Zhang_PAttack} proposed the perceptual attack by formulating it as an optimization problem with Lagrange multipliers, which they solved using the gradient descent method. This approach significantly reduced prediction accuracy and caused prediction errors. Shumitskaya~\etal~\cite{2022_BMVC_Ekarerina_UAP} proposed a universal perturbation method and its variants~\cite{2024_CVIU_Ekaterina_OUAP} that generate adversarial perturbations by computing a weighted sum of image gradients across the test dataset. The resulting perturbation can effectively increase the predicted scores for most inputs.
Recently, some attacks~\cite{2024_ICML_Ekaterina_IOI,abud2025evaluating} were proposed to process the adversarial perturbations in the frequency domain to improve the imperceptibility of perturbations. These works highlight the important role of gradients in attack methods and demonstrate the vulnerability of NR-IQA models to white-box adversarial attacks.

\subsection{Black-Box Attacks Against NR-IQA Models}
Black-box attacks on NR-IQA models have been relatively underexplored. There are two main strategies for attacking models in the black-box scenario. One is the query-based approach, where attackers can query the target model multiple times to approximate the gradient, which is then used to optimize the perturbations and generate adversarial examples. Yang~\etal~\cite{2024_TCSVT_SurFreeIQA} adapted the approach from SurFree~\cite{2021_CVPR_Maho_SurFree}, employing binary search in polar coordinates to generate adversarial examples by continuously querying the changes in predicted scores. Ran~\etal~\cite{2025_ESWA_RandomQuery} developed a query-based attack based on a random search paradigm. Although query-based attacks can achieve impressive attack performance in black-box scenarios, they require a large number of queries (nearly 10,000 times~\cite{2024_TCSVT_SurFreeIQA,2025_ESWA_RandomQuery}) to generate a single adversarial example, making the process highly time-consuming.

Compared to query-based attacks, transfer-based approaches are more time-efficient. Transfer-based attacks have been widely studied in the context of classification model attacks~\cite{2019_CVPR_YiranChen_AA,2022_CompSec_Liu_LowFreTransAdv}, but remain largely unexplored for NR-IQA models. Existing adversarial attacks on NR-IQA often suffer from poor transferability across different models~\cite{2022_NIPS_Zhang_PAttack,2025_ESWA_RandomQuery}. Korhonen~\etal~\cite{2022_QEVMAw_Korhonen_BIQA} attributed this issue to the diverse features learned by different NR-IQA models. To address this, they re-trained a ResNet-based source model~\cite{2016_CVPR_He_ResNet} to capture more generic quality assessment features. While adversarial examples generated from this model get some improvements in transferability, the gains were limited. Moreover, re-training the source model causes considerable computational overhead.

\section{Method}
In this paper, we propose a transfer-based black-box attack method against NR-IQA models, named Signed Ensemble Gaussian Attack (SEGA). SEGA consists of three components: Gaussian smoothing (Sec.~\ref{sec:gaussian_smoothing}), gradient ensembling (Sec.~\ref{sec:gradient_ensemble}), and perturbation filtering (Sec.~\ref{sec:pert_filter}), as illustrated in Fig.~\ref{fig:intro_pipline}. The first two parts aim to reduce noise in the gradients and accurately approximate the gradients of the target model. Perturbation filtering focuses on enhancing the imperceptibility of perturbations to ensure that adversarial examples are perceptually indistinguishable from the original images.

\subsection{Problem Definition}
In the transfer-based black-box scenario, there is an unknown NR-IQA target model $h$ and an available source model $f$. Given an input image $x$, the objective is to design an adversarial perturbation based on the source model $f$, denoted as $\Delta x (f)$, that maximally manipulates the prediction score of $x + \Delta x(f)$ by $h$. Meanwhile, the adversarial image $x + \Delta x(f)$ should maintain perceptual similarity to the original image $x$, ensuring that both images receive the same objective score from human observers,~\ie
\begin{equation}
\label{eq:target}
\max_{\Delta x(f)}~|h(x + \Delta x(f)) - h(x)| \quad \text{s.t.}~D(x, x + \Delta x(f)) \leqslant \epsilon,
\end{equation}
where $D(\cdot, \cdot)$ measures the perceptual distance between the original image and the adversarial example, $\epsilon$ represents a predefined threshold.
We further specify the attack direction as follows. Assuming that the predicted quality scores are normalized to the range $[0,100]$, the attacker aims to decrease the predicted score when $h(x)>50$; otherwise, the objective is to increase the predicted score of the adversarial image.
In this paper, we define $D(x, x + \Delta x(f)) = \Vert \Delta x(f)\Vert_\infty$ with the $\ell_\infty$ function commonly used in previous studies~\cite{2015_ICLR_Goodfellow_FGSM,2024_CVPR_Liu_NT,2024_ICML_Ekaterina_IOI}. In the following text, we use $\Delta x$ instead of $\Delta x(f)$ for its simplicity. 

Since the input gradient of the target model is inaccessible, the perturbation $\Delta x$ is crafted by solving the following white-box optimization problem on the accessible source model $f$:
\begin{equation}
\label{eq:single_source}
\max_{\Delta x}~|f(x + \Delta x) - f(x)| \quad \text{s.t.}~\Vert \Delta x\Vert_\infty \leqslant \epsilon.
\end{equation}
This white-box problem has been widely explored~\cite{2022_NIPS_Zhang_PAttack,2024_CVIU_Ekaterina_OUAP,2015_ICLR_Goodfellow_FGSM}. 
Due to the computational efficiency, we adopt the idea from one-step FGSM~\cite{2015_ICLR_Goodfellow_FGSM} attack to solve Eq.~\eqref{eq:single_source},~\ie

\begin{equation}
\label{eq:FGSM}
\Delta x =
\begin{cases}
-\epsilon \cdot \text{sgn}(\nabla f(x)), & \text{if } h(x)>50, \\
\epsilon \cdot \text{sgn}(\nabla f(x)), & \text{otherwise},
\end{cases}
\end{equation}
where $\text{sgn}(\cdot)$ denotes the sign function, and $\nabla f(x)$ is the gradient of the source model $f$ with respect to the input image $x$.

\subsection{Gaussian Smoothing}
\label{sec:gaussian_smoothing}

According to Eq.~\eqref{eq:FGSM}, the transferability of $x + \Delta x$ from the source model $f$ to the target model $h$ primarily depends on the extent of $h$'s instability along the direction of $\text{sgn}(\nabla f(x))$. While the level sets of models within the same task are expected to exhibit global similarity~\cite{2017_arxiv_Tram_TransSpace,2018_arxiv_Wu_TransSmooth}, local fluctuations caused by noise~\cite{2017_arxiv_Smilkov_SmoothGrad} in $\nabla f(x)$ may hurt this similarity. An efficient way to remove noise in $\nabla f(x)$ is smoothing $f$ with a Gaussian mollifier~\cite{2017_arxiv_Smilkov_SmoothGrad}, \ie
\begin{equation}
\begin{split}
    \label{eq:f_sigma_continous}
    f_\sigma (x) &= \frac{1}{(2\pi)^{d/2}} \int_{\mathbb{R}^d}
                        f(x+\sigma u) e^{-\Vert u \Vert^2_2~/2} du \\
                 &= \mathbb{E}_{u\sim \mathcal{N}(0,I_d)} [f(x+\sigma u)].
\end{split}
\end{equation}
In this formula, $d$ denotes the dimension of the image $x$, $\mathcal{N}(0, I_d)$ is the $d$-dimensional standard normal distribution with $I_d$ as the identity matrix, and $\sigma > 0$ determines the standard deviation of the term $\sigma u$. Additionally, $\sigma$ controls the degree of smoothing. Larger $\sigma$ values result in a smoother approximation, while smaller $\sigma$ values preserve more details of the model $f$~\cite{2023_arxiv_GSmoothGD}. This trade-off is formalized in Theorem~\ref{thr:Gaussian_sigma} and visually illustrated in Fig.~\ref{fig:Gaussian_smooth}.

\begin{figure}
    \centering
    \includegraphics[width=0.8\linewidth]{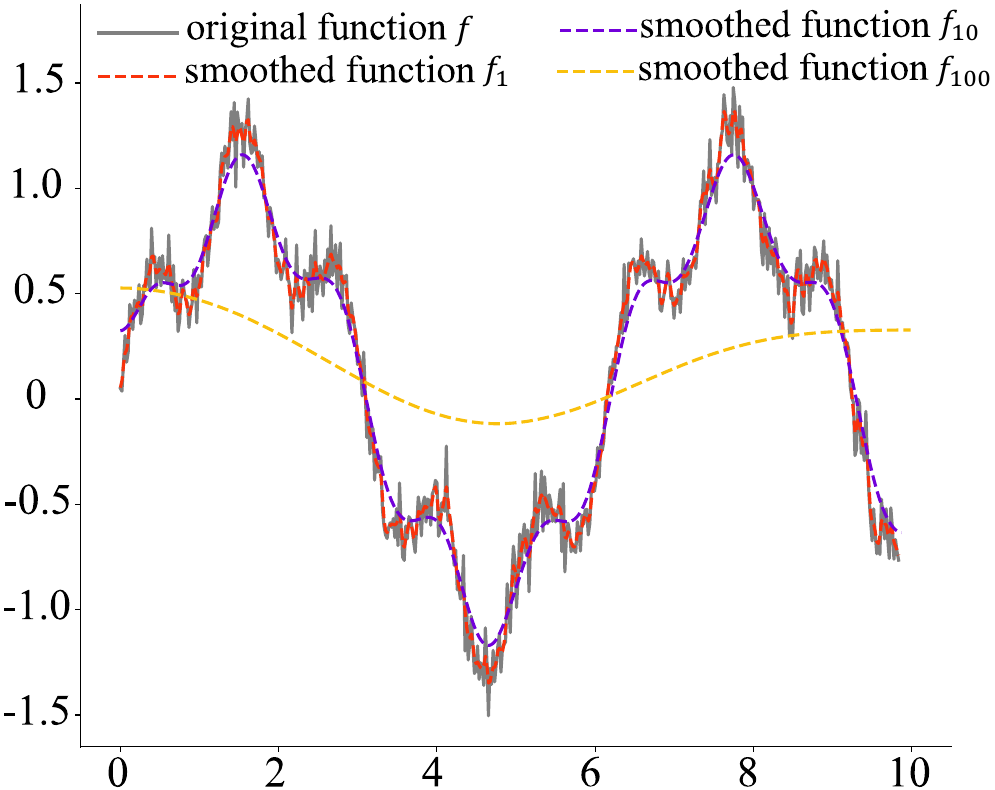}
    \caption{An illustration of Gaussian smoothing is presented. The original function is defined as $f(x) = \sin(x) + 0.3 \cdot \sin(5x) + \eta$, where $\eta$ represents random noise drawn from a normal distribution $\mathcal{N}(0,0.1)$. Gaussian smoothing is applied with three different standard deviation parameters: $\sigma = {1, 10, 100}$. The results demonstrate that a small $\sigma$ preserves excessive detail from the original noise, while an excessively large $\sigma$ leads to oversmoothing and thereby loses important signal characteristics. Therefore, selecting an appropriate $\sigma$ value is crucial for effective Gaussian smoothing.}
    \label{fig:Gaussian_smooth}
\end{figure}

\begin{theorem}
\label{thr:Gaussian_sigma}
    Suppose $f$ is Lipschitz-continuous, and $f_\sigma$ is the Gaussian smoothing of $f$. Then, for every $x$, we have
    \begin{equation}
        \lim_{\sigma\rightarrow 0} f_\sigma(x) = f(x).
    \end{equation}
\end{theorem}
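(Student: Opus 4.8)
The plan is to bound the quantity $|f_\sigma(x) - f(x)|$ directly, using the expectation form of $f_\sigma$ from Eq.~\eqref{eq:f_sigma_continous} together with the Lipschitz hypothesis, and then to show that this bound vanishes as $\sigma \to 0$. Since no exchange of limit and integral is actually required once we have a clean pointwise bound, the argument reduces to a short chain of inequalities followed by a finiteness check on a Gaussian moment.

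First I would write $f(x) = \mathbb{E}_{u\sim\mathcal{N}(0,I_d)}[f(x)]$, so that the difference becomes a single expectation:
\begin{equation}
|f_\sigma(x) - f(x)| = \left| \mathbb{E}_{u\sim\mathcal{N}(0,I_d)}\bigl[f(x+\sigma u) - f(x)\bigr] \right|.
\end{equation}
Next I would move the absolute value inside the expectation via Jensen's inequality (equivalently, the triangle inequality for integrals), and then invoke Lipschitz continuity of $f$ with constant $L$, which gives $|f(x+\sigma u) - f(x)| \leqslant L\,\Vert \sigma u\Vert_2 = L\sigma\,\Vert u\Vert_2$. Combining these steps yields
\begin{equation}
|f_\sigma(x) - f(x)| \leqslant \mathbb{E}_{u\sim\mathcal{N}(0,I_d)}\bigl[|f(x+\sigma u) - f(x)|\bigr] \leqslant L\sigma\,\mathbb{E}_{u\sim\mathcal{N}(0,I_d)}\bigl[\Vert u\Vert_2\bigr].
\end{equation}

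The only remaining point is to confirm that $c_d := \mathbb{E}_{u\sim\mathcal{N}(0,I_d)}[\Vert u\Vert_2]$ is a finite constant depending only on the dimension $d$; indeed $\Vert u\Vert_2$ follows a chi distribution with $d$ degrees of freedom, whose mean $\sqrt{2}\,\Gamma((d+1)/2)/\Gamma(d/2)$ is finite. With this we obtain $|f_\sigma(x) - f(x)| \leqslant L c_d\, \sigma$, and letting $\sigma \to 0$ forces the right-hand side to zero for every fixed $x$, establishing the claim. I do not expect a genuine obstacle here: the Lipschitz bound supplies an integrable dominating term uniformly in $\sigma$, so the finiteness of the Gaussian first moment is the only ingredient that needs explicit verification, and it is standard.
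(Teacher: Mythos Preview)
Your proposal is correct and follows essentially the same route as the paper: bound $|f_\sigma(x)-f(x)|$ by moving the absolute value inside the Gaussian expectation, apply the Lipschitz estimate to obtain $L\sigma\,\mathbb{E}_{u\sim\mathcal{N}(0,I_d)}[\Vert u\Vert_2]$, identify that expectation as the chi-distribution mean $\sqrt{2}\,\Gamma((d+1)/2)/\Gamma(d/2)$, and let $\sigma\to 0$. The only cosmetic difference is that the paper spells out the density-transformation computation for the chi mean, whereas you cite it directly.
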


\begin{proof}
    Suppose $f$ is Lipschitz-continuous with $L$ as the Lipschitz constant.
    Recall that 
    \begin{equation}
        f_\sigma (x) = \frac{1}{(2\pi)^{d/2}} \int_{\mathbb{R}^d}
                        f(x+\sigma u) e^{-\Vert u \Vert^2_2~/2} du,
    \end{equation}
    so
    \begin{equation}
    \begin{split}
        & ~~~~~~~~~~~~~~~~~~~~~|f_\sigma (x) - f(x)|  \\
        &= \left|
        \frac{1}{(2\pi)^{d/2}} \int_{\mathbb{R}^d}
                        \left(f(x+\sigma u) - f(x)\right) e^{-\Vert u \Vert^2_2~/2} du
                        \right| \\
            &\leqslant \frac{1}{(2\pi)^{d/2}} \int_{\mathbb{R}^d}
                        |f(x+\sigma u)-f(x)| e^{-\Vert u \Vert^2_2~/2} du.
    \end{split}
    \end{equation}
    Since $f$ is Lipschitz-continuous, we have
    \begin{equation}
        |f(x+\sigma u)-f(x)| \leqslant L \Vert \sigma u\Vert.
    \end{equation}
    Therefore, 
    \begin{equation}
    \begin{split}
        |f_\sigma (x) - f(x)|
            &\leqslant \frac{L\cdot \sigma}{(2\pi)^{d/2}} \int_{\mathbb{R}^d}
                        \Vert u\Vert e^{-\Vert u \Vert^2_2~/2} du \\
            & = (L\cdot \sigma) \mathbb{E}_{u\sim \mathcal{N}(0,I_d)}[\Vert u \Vert].
    \end{split}
    \end{equation}
    Since $u_i\sim \mathcal{N}(0,1)$, we have $u_i^2\sim \chi^2(1)$. Therefore, $\sum_{i=1}^d u_i^2\sim \chi^2(d)$,~\ie 
    \begin{equation}
        \Vert u \Vert_2^2 \sim \chi^2 (d).
    \end{equation}
    Let the variables be defined as $Y = \Vert u \Vert_2^2$ and $V = \Vert u \Vert$. Using the transformation technique with $V = q(Y) = \sqrt{Y}$, the probability density function of $V$ is given by
    \begin{equation}
        p_V(v) = p_Y(q^{-1}(y)) \left|
            \frac{dy}{dv}
            \right| = \frac{y^{d-1}e^{-y^2/2}}{2^{d/2-1}\Gamma(\frac{d}{2})},~~y>0,
    \end{equation}
    where $\Gamma(\cdot)$ represents the Gamma function~\cite{2002_gamma}.
    Therefore, the expectation of $V=\Vert u \Vert$ is
    \begin{equation}
        \mathbb{E}_{u\sim \mathcal{N}(0,I_d)}[\Vert u \Vert] = 
        \int_0^\infty  \frac{y^d e^{-y^2/2}}{2^{d/2-1}\Gamma(\frac{d}{2})} dy= \sqrt{2} \frac{\Gamma(\frac{d+1}{2})}{\Gamma(\frac{d}{2})}.
    \end{equation}
    Therefore,
    \begin{equation}
    \begin{split}
        \label{eq:ineq_fsigma_f}
        |f_\sigma (x) - f(x)| & \leqslant (L\cdot \sigma) \mathbb{E}_{u\sim \mathcal{N}(0,I_d)}[\Vert u \Vert] \\
        & = (L\cdot \sigma) \sqrt{2} \frac{\Gamma(\frac{d+1}{2})}{\Gamma(\frac{d}{2})}.
    \end{split}
    \end{equation}
    Let $\sigma\rightarrow 0$ in Eq.~\eqref{eq:ineq_fsigma_f}, we have
    \begin{equation}
        \lim_{\sigma\rightarrow 0} |f_\sigma (x) - f(x)| = 0.
    \end{equation}

    Thus, we have proved
    \begin{equation}
        \lim_{\sigma\rightarrow 0} f_\sigma(x) = f(x).
    \end{equation}
\end{proof}

According to Eq.~\eqref{eq:f_sigma_continous}, we define
\begin{equation}
f_\sigma(x) = \frac{1}{m} \sum_{i=1}^m f(x + \sigma u_i), \quad\text{where } u_i \sim \mathcal{N}(0, I_d),
\end{equation}
as the discrete Gaussian-smoothed version of $f$, where $m$ is the number of samples. The gradient $\nabla f(x)$ is then substituted by $\nabla f_\sigma(x)$,~\ie
\begin{equation}
\label{eq:smooth_gradient}
    \nabla f(x) \approx \nabla f_\sigma(x) = \frac{1}{m}\sum_{i=1}^m \nabla f(x+\sigma u_i).
\end{equation}

\subsection{Gradient Ensembling}
\label{sec:gradient_ensemble}
Ensembling source models to generate more transferable adversarial examples is a commonly used strategy in image classification tasks~\cite{2018_CVPR_Dong_MI-FGSM,2020_ICLR_Lin_NI-FGSM,2022_CompSec_Liu_LowFreTransAdv}. This approach is often considered a form of model augmentation, where multiple source models are attacked simultaneously. 

Inspired by this idea, we ensemble the gradients of $K$ source models $\{f^1,\dots,f^K\}$ to further reduce the noise present in the gradient of any single source model. 
Since every source model is smoothed by a Gaussian function, the final ensembled gradient with smoothing is
\begin{equation}
\label{eq:ensemble_smooth_grad}
    \hat{g}(x) = \frac{1}{K} \sum_{k=1}^K \nabla f^k_\sigma(x) 
 = \frac{1}{K\cdot m} \sum_{k=1}^K \sum_{i=1}^m \nabla f^k(x+\sigma u^k_i),
\end{equation}
where $u^k_i \in \mathbb{R}^d$ and $u^k_i \sim \mathcal{N}(0,I_d)$. The following theorem shows that there is an upper bound on the approximation error between $\hat{g}(x)$ and the gradient $\nabla h(x)$ of the target model $h$.

According to Theorem~\ref{thr:ensemble}, the upper bound depends on a constant $C$, which is expected to be small since it bounds the error between the source and target models, which are closely related.

\begin{theorem}
\label{thr:ensemble}
   Suppose the gradient of the target model $h$ is $L$-Lipschitz with $L$ being the Lipschitz constant, and $\hat{g}(x)$ is defined in Eq.~\eqref{eq:ensemble_smooth_grad}, we have
    \begin{equation}
        \Vert \hat{g}(x) - \nabla h(x) \Vert
        \leqslant \left(L\sigma+\frac{C}{\sigma}\right) \sqrt{2} \frac{\Gamma(\frac{d+1}{2})}{\Gamma(\frac{d}{2})},
    \end{equation}
    where $C$ is a constant related to $\{f^1, \dots, f^K\}$ and $\Gamma(\cdot)$ represents the Gamma function~\cite{2002_gamma}.
\end{theorem}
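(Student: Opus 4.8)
The plan is to introduce the Gaussian smoothing $h_\sigma$ of the target model as an intermediate quantity and split the error with the triangle inequality,
\begin{equation*}
\Vert \hat{g}(x) - \nabla h(x)\Vert \leqslant \Vert \hat{g}(x) - \nabla h_\sigma(x)\Vert + \Vert \nabla h_\sigma(x) - \nabla h(x)\Vert .
\end{equation*}
The second term is the \emph{smoothing bias} of the target, and the first term measures how well the ensemble of smoothed source gradients tracks the smoothed target gradient. I would bound the two terms separately and show that each produces one of the summands $L\sigma$ and $C/\sigma$, sharing the common factor $\mathbb{E}_{u\sim\mathcal{N}(0,I_d)}[\Vert u\Vert]$.

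For the smoothing-bias term I would use the representation $\nabla h_\sigma(x)=\mathbb{E}_u[\nabla h(x+\sigma u)]$ obtained by differentiating Eq.~\eqref{eq:f_sigma_continous}. Since $\mathbb{E}_u[\nabla h(x)]=\nabla h(x)$, we have $\nabla h_\sigma(x)-\nabla h(x)=\mathbb{E}_u[\nabla h(x+\sigma u)-\nabla h(x)]$, and the $L$-Lipschitz hypothesis on $\nabla h$ gives $\Vert \nabla h(x+\sigma u)-\nabla h(x)\Vert\leqslant L\sigma\Vert u\Vert$. Moving the norm inside the expectation by Jensen's inequality yields $\Vert \nabla h_\sigma(x)-\nabla h(x)\Vert\leqslant L\sigma\,\mathbb{E}_u[\Vert u\Vert]$, the $L\sigma$ contribution.

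The main work is the ensemble term, and here I would invoke the Stein-type identity for Gaussian smoothing, $\nabla f_\sigma(x)=\tfrac{1}{\sigma}\mathbb{E}_u[f(x+\sigma u)\,u]$, which follows by integrating Eq.~\eqref{eq:f_sigma_continous} by parts against the Gaussian density (the density's derivative reproduces the factor $u/\sigma$). Applying it to each $f^k$ and to $h$ and averaging over $k$, the discrepancy becomes
\begin{equation*}
\hat{g}(x)-\nabla h_\sigma(x)=\frac{1}{K\sigma}\sum_{k=1}^K \mathbb{E}_u\!\left[\big(f^k(x+\sigma u)-h(x+\sigma u)\big)\,u\right].
\end{equation*}
The modelling assumption that source and target solve the same task, and hence produce uniformly close scores, is encoded by a constant $C$ with $\sup_y |f^k(y)-h(y)|\leqslant C$ for every $k$; then the triangle inequality together with $\Vert a\,u\Vert=|a|\,\Vert u\Vert$ gives $\Vert \hat{g}(x)-\nabla h_\sigma(x)\Vert\leqslant \tfrac{C}{\sigma}\,\mathbb{E}_u[\Vert u\Vert]$. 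Using the Stein form rather than $\mathbb{E}_u[\nabla f^k]$ is the essential trick: it converts a bound on \emph{output} discrepancies, the only quantity one can reasonably control across different NR-IQA models, into a bound on the \emph{gradient} approximation error, and it is exactly what produces the inverse-$\sigma$ scaling.

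Finally I would combine the two bounds and evaluate the common factor. Since $\Vert u\Vert$ with $u\sim\mathcal{N}(0,I_d)$ follows a chi distribution with $d$ degrees of freedom, its mean is $\mathbb{E}_u[\Vert u\Vert]=\sqrt{2}\,\Gamma(\tfrac{d+1}{2})/\Gamma(\tfrac{d}{2})$, which I would record as a short lemma (a standard Gamma-function integral); substituting gives precisely $\big(L\sigma+\tfrac{C}{\sigma}\big)\sqrt{2}\,\Gamma(\tfrac{d+1}{2})/\Gamma(\tfrac{d}{2})$. The one point needing care is that $\hat{g}$ in Eq.~\eqref{eq:ensemble_smooth_grad} is an \emph{empirical} average over the samples $u^k_i$, whereas the argument is at the population level; I would run the estimates on the idealized smoothed gradients $\nabla f^k_\sigma=\mathbb{E}_u[\nabla f^k(x+\sigma u)]$ (the $m\to\infty$ limit of the empirical sum) and note that the finite-sample version approximates this, so the stated deterministic bound is understood for the population smoothing. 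I expect this reconciliation, rather than any single inequality, to be the only genuinely delicate part of the proof.
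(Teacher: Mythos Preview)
Your proposal is correct and follows essentially the same route as the paper: the paper likewise writes each source model as $f^k=h+\varepsilon^k$ with $|\varepsilon^k|\leqslant C$, applies the Stein-type identity $\nabla f_\sigma(x)=\tfrac{1}{\sigma}\mathbb{E}_u[f(x+\sigma u)u]$ to each $f^k$, splits $\hat g(x)-\nabla h(x)$ into a term $\mathbb{E}_u[\nabla h(x+\sigma u)-\nabla h(x)]$ handled by the $L$-Lipschitz assumption and a term $\tfrac{1}{\sigma}\mathbb{E}_u[\varepsilon^k(x+\sigma u)u]$ handled by the uniform bound, and closes with the chi-distribution mean $\mathbb{E}[\Vert u\Vert]=\sqrt{2}\,\Gamma(\tfrac{d+1}{2})/\Gamma(\tfrac{d}{2})$. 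Your explicit introduction of $\nabla h_\sigma$ as the intermediate and your remark on the population-versus-empirical reading of $\hat g$ are exactly the tacit moves the paper makes without comment.
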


\begin{proof}
    Suppose $\nabla h(x)$ is Lipschitz-continuous with $L$ as the Lipschitz constant.
    According to work~\cite{proof_gradient}, 
    \begin{equation}
        \nabla f_\sigma (x) = \frac{1}{\sigma} \mathbb{E}_{u\sim \mathcal{N}(0,I_d)} [f(x+\sigma u)u].
    \end{equation}
    Moreover, each source model $f^k\in \{f^1, \dots, f^K\}$ can be represented by
    \begin{equation}
        f^k(x) = h(x) + \varepsilon^k(x),
    \end{equation}
    where $\varepsilon^k(x)$ is the error function. Assume the absolute value of each $\varepsilon^k(x)$ is bounded by a constant $c^k$,~\ie $|\varepsilon^k(x)|\leqslant c^k$.
    Let $C = \max\{c^1,\dots,c^K\}$.
    
    Recall that
    \begin{equation}
        \hat{g}(x) = \frac{1}{K} \sum_{k=1}^K \nabla f^k_\sigma (x),
    \end{equation}
    then
    {\small
    \begin{equation}
    \begin{split}
        &\Vert \hat{g}(x) - \nabla h(x) \Vert
        = \left\Vert 
        \frac{1}{K\cdot \sigma} \sum_{k=1}^K \mathbb{E}[f^k(x+\sigma u)u] - \nabla h(x) 
        \right\Vert \\
        &= \left\Vert 
        \frac{1}{K} \sum_{k=1}^K ( \mathbb{E}[h(x+\sigma u)u  / \sigma] + \mathbb{E}[\varepsilon^k(x+\sigma u)u/\sigma]) - \nabla h(x) 
        \right\Vert \\
        &={\left\Vert 
        \frac{1}{K} \sum_{k=1}^K \mathbb{E} [\nabla h(x+\sigma u) -\nabla h(x)]
        + \frac{1}{K} \sum_{k=1}^K \mathbb{E}\left[\frac{\varepsilon^k(x+\sigma u)u}{\sigma}\right]
        \right\Vert }\\
        &\leqslant \frac{1}{K} \sum_{k=1}^K \mathbb{E} \Vert \nabla h(x+\sigma u) -\nabla h(x) \Vert 
        +\frac{1}{K} \sum_{k=1}^K \mathbb{E} \left \Vert \frac{\varepsilon^k(x+\sigma u)u}{\sigma} \right \Vert \\
        &\leqslant \frac{L\cdot \sigma}{K} \sum_{k=1}^K \mathbb{E} \Vert u\Vert + \frac{C}{K\cdot\sigma} \sum_{k=1}^K \mathbb{E} \Vert u\Vert \\
        &= \left(L\sigma+\frac{C}{\sigma}\right)\mathbb{E} \Vert u\Vert
        =\left(L\sigma+\frac{C}{\sigma}\right) \sqrt{2} \frac{\Gamma(\frac{d+1}{2})}{\Gamma(\frac{d}{2})}.
    \end{split}
    \end{equation}
    }
\end{proof}

\begin{figure}[!t]
    \centering
    \includegraphics[width=0.98\linewidth]{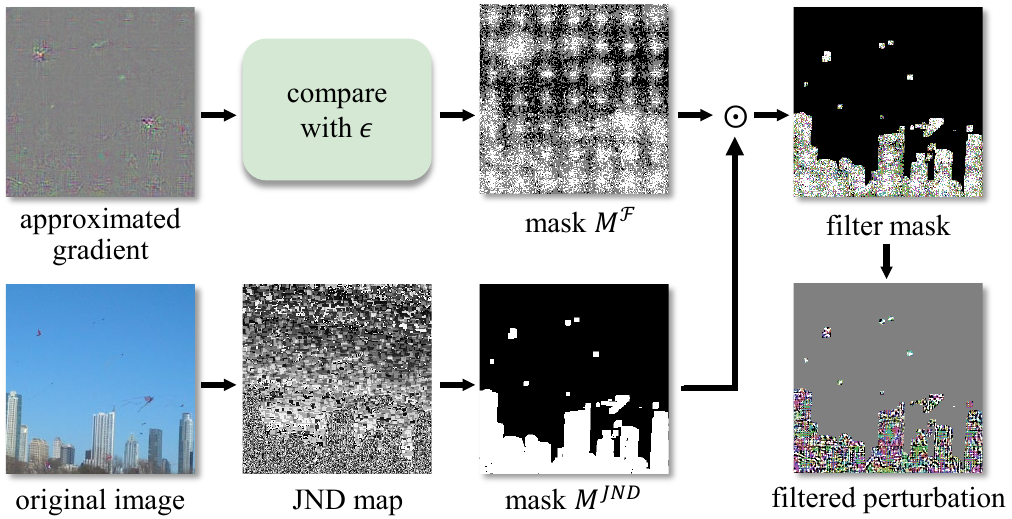}
    \caption{The perturbation filtering pipeline employs two masks. The first mask, $M^{\mathcal{F}}$, removes unimportant components from the approximated gradient. The second mask, $M^{\text{JND}}$, is designed according to the JND map. It filters perturbations on pixels where the JND value is below a threshold $\epsilon$, ensuring perceptual imperceptibility.}
    \label{fig:pert_filter}
\end{figure}

\subsection{Perturbation Filtering}
\label{sec:pert_filter}
Based on Eq.~\eqref{eq:ensemble_smooth_grad} and the basic idea of the FGSM attack~\cite{2015_ICLR_Goodfellow_FGSM}, the adversarial perturbation is

\begin{equation}
\label{eq:FGSM_SEGA}
\Delta x =
\begin{cases}
-\epsilon \cdot \text{sgn}(\hat{g}(x)), & \text{if } h(x)>50, \\
\epsilon \cdot \text{sgn}(\hat{g}(x)), & \text{otherwise}.
\end{cases}
\end{equation}

However, we think this straightforward approach may introduce redundant noise, resulting in poor imperceptibility of the adversarial perturbations. To overcome this limitation, we introduce a perturbation filter module designed to remove these undesirable noise, as illustrated in Fig.~\ref{fig:pert_filter}.

Firstly, we observe that the $\text{sgn}(\cdot)$ function may amplify some \textbf{unimportant components} in the gradient $\hat{g}(x)$. In detail, components with small absolute values in $\hat{g}(x)$ theoretically have limited impact on the predicted score, yet the $\text{sgn}(\cdot)$ function treats them as equally important as those with larger values. For instance, if a component of $\hat{g}(x)$ is $0.0001$, adding perturbations to it has little impact on the predicted score. However, since $\text{sgn}(0.0001) = 1$, Eq.~\eqref{eq:FGSM_SEGA} still applies an $\epsilon$-sized perturbation to this component. Therefore, to remove such kinds of unimportant perturbations, we design a filter mask based on the component values of $\hat{g}$ as follows, where $\alpha$ is a pre-defined threshold. 
\begin{equation}
\label{eq:filter_grad}
M^\mathcal{F}_j =
\begin{cases}
0, & \text{if } |\hat{g}_j| < \alpha, \\
1, & \text{otherwise}.
\end{cases}
\end{equation}
In this equation, $j \in \{1, \dots, d\}$ denotes the index of each component.

\begin{algorithm}[!t]
    \caption{Signed Ensemble Gaussian Attack (SEGA)}
    \label{alg:SEGA}
    \textbf{Inputs}: Source models $f^1,\dots,f^K$, target model $h$, original image $x$, JND mask $\text{JND}(x)$\\
    \textbf{Parameters}: A pre-defined threshold $\alpha$, attack strength $\epsilon$, Gaussian smoothing parameter $\sigma$, sampling number $m$\\
    \textbf{Output}: Adversarial example $\Tilde{x}$
    \begin{algorithmic}[1] %
        \STATE Initialize $\hat{g} \gets 0$
        \FOR{$k=1$ to $K$}
        \FOR{$i=1$ to $m$}
        \STATE $\hat{g} \gets \hat{g} + \nabla f^k(x+\sigma u^k_i),~~\text{where~~}u^k_i\sim \mathcal{N}(0,I_d)$.
        \ENDFOR
        \ENDFOR
        \STATE $\hat{g} \gets \hat{g} / Km$ 
        \STATE $M^{\mathcal{F}} \gets \hat{g} \geqslant \alpha$ \COMMENT{See Eq.~\eqref{eq:filter_grad}}
        \STATE $M^{\text{JND}} \gets \text{JND}(x) \geqslant \epsilon$ \COMMENT{See Eq.~\eqref{eq:JND_grad}}
        \IF {$h(x)> 50$}
        \STATE $\Tilde{x} \gets x - M^{\text{JND}} \odot M^\mathcal{F} \odot \left(\epsilon \cdot \text{sgn}(\hat{g}) \right) $ \COMMENT{decrease score}
        \ELSE
        \STATE $\Tilde{x} \gets x + M^{\text{JND}} \odot M^\mathcal{F} \odot \left(\epsilon \cdot \text{sgn}(\hat{g}) \right)$ \COMMENT{increase score}
        \ENDIF
        \STATE \textbf{return} $\Tilde{x}$
    \end{algorithmic}
\end{algorithm}

Secondly, recognizing that different regions of an image exhibit varying tolerance to perturbations~\cite{2013_QoMEX_Yu_ImageComplexity}, we further remove some \textbf{inappropriate perturbations} based on a Just Noticeable Difference (JND) map. Studies have shown that perturbations in low-frequency regions are more perceptible compared to those in high-frequency regions~\cite{2010_TCSVT_Liu_JNDregions,2024_TCSVT_SurFreeIQA}. The JND value at each pixel serves as a metric to quantify this tolerance. It represents the maximum perturbation that the pixel can tolerate without being perceptible. 
Therefore, we compute the JND map of $x$ using the method proposed by Liu~\etal~\cite{2010_TCSVT_Liu_JNDregions}, and design a mask based on $\text{JND}(x)$ as follows,
\begin{equation}
\label{eq:JND_grad}
\begin{split}
    M^{\text{JND}}_{j} &= 
    \begin{cases} 
        0, & \text{if~~} \text{JND}_{j}(x) < \epsilon, \\
        1, & \text{otherwise}.
    \end{cases} \\
\end{split}
\end{equation}

The final filtered adversarial perturbation takes the form:
\begin{equation}
    \Delta x = \pm \left(M^{\text{JND}} \odot M^\mathcal{F}\right) \odot \left(
    \epsilon \cdot \text{sgn}
        (\hat{g}(x))
        \right),
\end{equation}
where the sign depends on $h(x)$ as Eq.~\eqref{alg:SEGA} shows, $\odot$ represents the element-wise (Hadamard) multiplication.
The complete SEGA algorithm is provided in Algorithm~\ref{alg:SEGA}, where all model predictions are mapped to the interval $[0,100]$.

\section{Experiments}
\label{sec:experiment}
 In this section, we first introduce experimental settings (Sec.~\ref{sec:exp_setting}), including the NR-IQA models, the compared attack methods, the transfer-based black-box settings, and the hyperparameters of attacks. Next, we report the superior transferability of SEGA compared to existing transfer-based attacks in Sec.~\ref{sec:main_result}. We also demonstrate the imperceptibility of the perturbations using both quantitative and qualitative metrics (Sec.~\ref{sec:exp_quality}). 
 Ablation studies in Sec.~\ref{sec:exp_ablation} validate the choice of hyperparameters in SEGA, while Sec.~\ref{sec:complexity} analyzes SEGA's computational complexity and efficiency. Finally, in Sec.~\ref{appendix:exp_query}, we provide a comparative analysis between SEGA and query-based attacks in the NR-IQA context.

 \begin{table*}[!t]
\caption{Comparison of SEGA with other attacks on the CLIVE dataset. Under the best strategy, we present the highest performance achieved by the compared method across all source models, with the associated source model listed in the `source' column. The highest and second-highest performance values are highlighted in \textbf{bold} and \underline{underline} respectively}
    \centering
    \renewcommand\arraystretch{1.1}
    \setlength{\tabcolsep}{1.7pt}
    \resizebox{\textwidth}{!}{
    \begin{tabular}{clcccccccccccc}
    \toprule
         & & \multicolumn{6}{c}{Target: HyperIQA} & \multicolumn{6}{c}{Target: DBCNN}\\ \cmidrule(lr){3-8} \cmidrule(lr){9-14}
        &~ & Source & MAE$\uparrow$ & ~R$\downarrow~$ & SROCC$\downarrow$ & PLCC$\downarrow$ & KROCC$\downarrow$ & Source & MAE$\uparrow$ & R$\downarrow$ & SROCC$\downarrow$ & PLCC$\downarrow$ & KROCC$\downarrow$\\ \midrule
        \multirow{4}{*}{Best Strategy} & FGSM & DBCNN & 8.648 & 1.054 & 0.825 & 0.841 & 0.640 
        & LIQE & 7.950 & 1.003 & 0.866 & 0.859 & 0.680 \\ 
        
        &Pattack & Linearity & 7.992 & 1.153 & 0.941 & 0.923 & 0.793 
        & Linearity & 6.423 & 1.148 & 0.968 & 0.966 & 0.852 \\ 
        
        &OUAP & Linearity & \textbf{16.779} & \underline{0.850} & \underline{0.733} & \textbf{0.692} & \underline{0.552} 
        & LIQE & \textbf{13.282} & \underline{0.894} & \underline{0.747} & \underline{0.731} & \underline{0.557} \\ 
        
        &IOI & Linearity & 7.728 & 1.120 & 0.931 & 0.921 & 0.781 
        & Linearity & 5.974 & 1.262 & 0.931 & 0.912 & 0.770 \\ \hline

        \multirow{4}{*}{Average Strategy} & FGSM & - & 8.929 & 1.024 & 0.840 & 0.855 & 0.664 & - & 7.166 & 1.071 & 0.866 & 0.861 & 0.682  \\ 
        &Pattack & - & 7.375 & 1.167 & 0.959 & 0.948 & 0.834 & - & 5.058 & 1.279 & 0.974 & 0.975 & 0.870 \\ 
        &OUAP & - & 4.205 & 1.498 & 0.965 & 0.954 & 0.848 & - & 2.912 & 1.548 & 0.978 & 0.975 & 0.869  \\  
        &IOI & - & 5.524 & 1.292 & 0.973 & 0.966 & 0.864 & - & 4.290 & 1.401 & 0.970 & 0.962 & 0.858  \\ \hline
        
        &Kor & - & \underline{16.505} & \textbf{0.781} & 0.858 & 0.842 & 0.675 
        & - & 8.914 & 1.028 & 0.883 & 0.869 & 0.710 \\ 
        &SEGA & - & 12.186 & 0.883 & \textbf{0.675} & \underline{0.706} & \textbf{0.507} 
        & - & \underline{10.493} & \textbf{0.876} & \textbf{0.562} & \textbf{0.626} & \textbf{0.412} \\ \midrule
        
        &~ & \multicolumn{6}{c}{Target: LinearityIQA} & \multicolumn{6}{c}{Target: LIQE}\\ \cmidrule(lr){3-8} \cmidrule(lr){9-14}
        & ~ & Source & MAE$\uparrow$ & R$\downarrow$ & SROCC$\downarrow$ & PLCC$\downarrow$ & KROCC$\downarrow$ & Source & MAE$\uparrow$ & R$\downarrow$ & SROCC$\downarrow$ & PLCC$\downarrow$ & KROCC$\downarrow$\\ \midrule
        \multirow{4}{*}{Best Strategy} & FGSM & HyperIQA & 12.383 & 0.969 & 0.709 & 0.666 & 0.531 
        & HyperIQA & \underline{11.344} & \underline{1.103} & 0.909 & 0.894 & 0.754 \\ 
        
        &Pattack & HyperIQA & 7.896 & 1.156 & 0.953 & 0.930 & 0.824 
        & HyperIQA & 6.145 & 1.440 & 0.976 & 0.979 & 0.876 \\ 
        
        &OUAP & DBCNN & \underline{15.789} & 0.966 & \underline{0.663} & \underline{0.596} & \underline{0.501}
        & DBCNN & 6.406 & 1.450 & 0.938 & 0.941 & 0.826 \\
        
        &IOI & HyperIQA & 9.133 & 1.123 & 0.870 & 0.865 & 0.693
        & HyperIQA & 7.098 & 1.437 & 0.956 & 0.948 & 0.837 \\ \hline

        \multirow{4}{*}{Average Strategy} & FGSM & - & 12.787 & \underline{0.880} & 0.790 & 0.754 & 0.602 & - & 8.133 & 1.267 & 0.952 & 0.943 & 0.825  \\ 
        &Pattack & - & 8.042 & 1.126 & 0.962 & 0.941 & 0.834 & - & 5.781 & 1.477 & 0.980 & 0.978 & 0.889 \\ 
        &OUAP & - & 5.583 & 1.366 & 0.947 & 0.931 & 0.808 & - & 3.004 & 1.795 & 0.990 & 0.991 & 0.923  \\ 
        &IOI & - & 7.114 & 1.210 & 0.946 & 0.935 & 0.811 & - & 5.102 & 1.689 & 0.979 & 0.973 & 0.896  \\ \hline
        
        &Kor & - & 14.475 & 0.913 & 0.802 & 0.786 & 0.612 
        & - & 10.742 & 1.311 & \underline{0.899} & \underline{0.883} & \underline{0.752} \\ 
        
        &SEGA & - & \textbf{17.938} &\textbf{0.700} & \textbf{0.504} & \textbf{0.503} & \textbf{0.366} 
        & - & \textbf{15.094} & \textbf{0.927} & \textbf{0.831} & \textbf{0.801} & \textbf{0.663} \\ \bottomrule
    \end{tabular}
    }
\label{tab:main_result_clive}
\end{table*}

 \begin{table*}[!t]
\caption{
Comparison of SEGA with other attacks on the KonIQ-10k dataset. Under the best strategy, we present the highest performance achieved by the compared method across all source models, with the associated source model listed in the `source' column. The highest and second-highest performance values are highlighted in \textbf{bold} and \underline{underline} respectively
}
    \centering
    \renewcommand\arraystretch{1.1}
    \setlength{\tabcolsep}{1.7pt}
    \resizebox{\textwidth}{!}{
    \begin{tabular}{clcccccccccccc}
    \toprule
         & & \multicolumn{6}{c}{Target: HyperIQA} & \multicolumn{6}{c}{Target: DBCNN}\\ \cmidrule(lr){3-8} \cmidrule(lr){9-14}
        &~ & Source & MAE$\uparrow$ & ~R$\downarrow~$ & SROCC$\downarrow$ & PLCC$\downarrow$ & KROCC$\downarrow$ & Source & MAE$\uparrow$ & R$\downarrow$ & SROCC$\downarrow$ & PLCC$\downarrow$ & KROCC$\downarrow$\\ \midrule
        \multirow{4}{*}{Best Strategy} 
        & FGSM & LIQE & \underline{12.902} & \textbf{0.798} & 0.764 & 0.782 & 0.569 & LIQE & \underline{13.165} & \textbf{0.765} & \underline{0.737} & 0.747 & \underline{0.539} \\ 
        &Pattack & DBCNN & 3.544 & 1.505 & 0.933 & 0.936 & 0.781 & LIQE & 4.383 & 1.280 & 0.963 & 0.971 & 0.836 \\ 
        &OUAP & DBCNN & 8.751 & 1.254 & \textbf{0.626} & \textbf{0.560} & \textbf{0.462} & HyperIQA & 7.281 & 1.242 & 0.758 & \underline{0.701} & 0.568\\ 
        &IOI & LinearityIQA & 5.183 & 1.333 & 0.856 & 0.875 & 0.673 & LinearityIQA & 6.107 & 1.182 & 0.869 & 0.886 & 0.687 \\ \hline
        
        \multirow{4}{*}{Average Strategy} 
        & FGSM & - & 3.859 & 1.364 & 0.966 & 0.963 & 0.845 & - & 4.634 & 1.260 & 0.958 & 0.951 & 0.821  \\ 
        &Pattack & - & 3.254 & 1.578 & 0.961 & 0.956 & 0.834 & - & 2.989 & 1.535 & 0.977 & 0.981 & 0.873 \\ 
        &OUAP & - & 4.598 & 1.336 & 0.933 & 0.948 & 0.784 & - & 4.947 & 1.254 & 0.937 & 0.946 & 0.785 \\  
        &IOI & - & 2.790 & 1.643 & 0.958 & 0.956 & 0.828 & - & 3.138 & 1.536 & 0.950 & 0.954 & 0.810  \\ \hline
        
        &Kor & - & 9.165 & 1.164 & 0.774 & 0.756 & 0.591 & - & 5.570 & 1.342 & 0.869 & 0.853 & 0.691  \\ 
        &SEGA & - & \textbf{13.292} & \underline{0.818} & \underline{0.653} & \underline{0.664} & \underline{0.470}
            & - & \textbf{14.111} & \underline{0.782} & \textbf{0.584} & \textbf{0.606} & \textbf{0.414} \\ \midrule
        
        &~ & \multicolumn{6}{c}{Target: LinearityIQA} & \multicolumn{6}{c}{Target: LIQE}\\ \cmidrule(lr){3-8} \cmidrule(lr){9-14}
        & ~ & Source & MAE$\uparrow$ & R$\downarrow$ & SROCC$\downarrow$ & PLCC$\downarrow$ & KROCC$\downarrow$ & Source & MAE$\uparrow$ & R$\downarrow$ & SROCC$\downarrow$ & PLCC$\downarrow$ & KROCC$\downarrow$\\ \midrule
        \multirow{4}{*}{Best Strategy} 
        & FGSM & HyperIQA & 4.798 & \underline{1.400} & 0.710 & 0.675 & 0.518
        & LinearityIQA & \textbf{24.891} & \textbf{0.730} & \underline{0.826} & \underline{0.811} & \underline{0.636}  \\
        
        &Pattack & HyperIQA & 2.541 & 1.815 & 0.925 & 0.898 & 0.770
        & LinearityIQA & 6.184 & 1.455 & 0.940 & 0.946 & 0.807 \\ 
        
        &OUAP & DBCNN & \textbf{6.351} & 1.418 & \textbf{0.560} & \textbf{0.516} & \textbf{0.398}
        & LinearityIQA & 16.734 & 0.892 & 0.899 & 0.884 & 0.730 \\
        
        &IOI & HyperIQA & 3.338 & 1.651 & 0.837 & 0.818 & 0.652
        & LinearityIQA & 6.059 & 1.470 & 0.965 & 0.962 & 0.846  \\ \hline

        \multirow{4}{*}{Average Strategy} 
        & FGSM & - & 2.080 & 1.817 & 0.949 & 0.931 & 0.813 & - & 3.311 & 1.707 & 0.989 & 0.986 & 0.918 \\
        &Pattack & - & 2.650 & 1.768 & 0.940 & 0.914 & 0.794 & - & 5.262 & 1.541 & 0.953 & 0.958 & 0.831 \\
        &OUAP & - & 2.542 & 1.811 & 0.902 & 0.871 & 0.738 & - & 7.664 & 1.359 & 0.954 & 0.954 & 0.832 \\
        &IOI & - & 2.358 & 1.855 & 0.929 & 0.910 & 0.784 & - & 2.834 & 1.786 & 0.989 & 0.988 & 0.919 \\ \hline
        
        &Kor & - & \underline{5.303} & 1.485 & \underline{0.704} & \underline{0.660} & \underline{0.524}
        & - & 7.231 & 1.417 & 0.932 & 0.927 & 0.784 \\
        
        & SEGA & - & 5.164 & \textbf{1.332} & 0.800 & 0.756 & 0.606
        & - & \underline{22.094} & \underline{0.746} & \textbf{0.809} & \textbf{0.767} & \textbf{0.622}  \\ \bottomrule
    \end{tabular}
    }
\label{tab:main_result_koniq}
\end{table*}

\begin{table*}[!t]
\caption{Quantitative comparison of the image quality of adversarial examples generated by different attack methods against various target models, with SSIM values computed between the adversarial examples and the original images}
    \centering
    \resizebox{0.9 \textwidth}{!}{
    \begin{tabular}{lcccccccc}
    \toprule
         & \multicolumn{4}{c}{CLIVE} & \multicolumn{4}{c}{KonIQ-10k} \\ \cmidrule(lr){2-5} \cmidrule(lr){6-9}
        Target & HyperIQA & DBCNN & LinearityIQA & LIQE & HyperIQA & DBCNN & LinearityIQA & LIQE \\ \midrule
        FGSM & 0.733 & 0.755 & 0.740 & 0.740 & 0.748 & 0.748 & 0.741 & 0.739 \\
        Pattack & 0.731 & 0.731 & 0.733 & 0.733 & 0.740 & 0.739 & 0.740 & 0.740\\ 
        OUAP & 0.815 & 0.803 & 0.826 & 0.826 & 0.821 & 0.836 & 0.821 & 0.809\\ 
        IOI & 0.883 & 0.883 & 0.888 & 0.888 & 0.905 & 0.905 & 0.906 & 0.905\\ 
        Kor & 0.931 & 0.931 & 0.931 & 0.931 & 0.940 & 0.940 & 0.940 & 0.940\\ 
        SEGA & 0.862 & 0.857 & 0.881 & 0.863 & 0.837 & 0.836 & 0.840 & 0.838\\ \bottomrule
    \end{tabular}
    }
\label{tab:image_SSIM}
\end{table*}

\subsection{Experimental Settings}
\label{sec:exp_setting}
\textbf{Dataset and NR-IQA models.} 
Experiments were conducted on the widely used CLIVE~\cite{2015_TIP_LIVEC} and KonIQ-10k~\cite{koniq_10k} datasets. For CLIVE, we randomly split the data in 80\%/20\% for training and adversarial attack evaluation, following the protocol of prior work~\cite{2024_CVPR_Liu_NT}. For the larger KonIQ-10k dataset, we employed a 60\%/20\%/20\% split for training, validation, and testing.
We consider five representative NR-IQA models: HyperIQA~\cite{2020_CVPR_hyperIQA}, DBCNN~\cite{2020_TCSVT_DBCNN}, LinearityIQA~\cite{2020_MM_LinearityIQA}, LIQE~\cite{2023_CVPR_LIQE}, and MANIQA~\cite{2022_CVPRw_MANIQA}. Among them, the first three are CNN-based, while the latter two are transformer-based. All models were trained on the same training subset using the official implementations released by their respective authors.

\textbf{Compared attack methods.} We compare our method with five NR-IQA attack methods: FGSM adapted for NR-IQA models~\cite{2024_CVPR_Liu_NT}, perceptual attack (Pattack)~\cite{2022_NIPS_Zhang_PAttack}, OUAP~\cite{2024_CVIU_Ekaterina_OUAP}, IOI~\cite{2024_ICML_Ekaterina_IOI}, and the attack proposed by Korhonen~\etal~\cite{2022_QEVMAw_Korhonen_BIQA} (short for Kor). All these methods are applicable in the transfer-based scenario. The one-step FGSM is implemented following the setup in work~\cite{2024_CVPR_Liu_NT}, while the other attack methods use the official implementations released by their respective authors.

\textbf{Transfer-based black-box settings.} The main experiment contains HyperIQA~\cite{2020_CVPR_hyperIQA}, DBCNN~\cite{2020_TCSVT_DBCNN}, LinearityIQA~\cite{2020_MM_LinearityIQA}, and LIQE~\cite{2023_CVPR_LIQE} models. For each experimental run, one model serves as the target model $h$, while the remaining three models form an ensemble of source models $\{f^1, f^2, f^3\}$ for our proposed SEGA method. Following the setup in~\cite{2022_QEVMAw_Korhonen_BIQA}, the Kor method employs a pre-trained ResNet as the source model, regardless of the target model identity.

For other competing methods, we adopt two evaluation strategies: \emph{Best Strategy} and \emph{Average Strategy}. Under the \emph{Best strategy}, each of the other three models is independently used as the source model. We evaluate the attack performance for each $\{f^k$, $h\}$ pair ($k = 1, 2, 3$) and report the best performance. The \emph{Average Strategy}, on the other hand, constructs the final adversarial example for each method by averaging the perturbations generated by that method across all available source models.

\textbf{Parameters chosen.} To enable a fair comparison, we match the attack strength (the $\ell_\infty$ norm of the adversarial perturbation) for all methods that expose such a parameter, and adopt the established protocols from previous studies for the remaining methods. We configure each attack method as follows. We use the one-step FGSM attack with an attack strength of $0.03$~\cite{2024_CVPR_Liu_NT}. Following the experimental settings in~\cite{2024_TCSVT_SurFreeIQA}, Pattack~\cite{2018_CVPR_Richard_LPIPS} adopts LPIPS as the perceptual constraint, with the corresponding weight set to $9,000,000$. OUAP generates the universal perturbation using $10$ iterations with an attack strength of $0.03$ trained on the testing data~\cite{2024_CVIU_Ekaterina_OUAP}. IOI uses a learning rate of $0.1$ and a truncation parameter of $0.07$~\cite{2024_ICML_Ekaterina_IOI}. For the Kor attack, the learning rate is set to $2$. The proposed SEGA method sets the attack strength to $\epsilon=0.03$. For Gaussian smoothing, the sample number is $m=10$, and the smoothing parameter is $\sigma=10/255$. 
The filter mask $M^\mathcal{F}$ employs thresholds of $\alpha=0.02$ for the CLIVE dataset and $\alpha=0.005$ for the KonIQ-10k dataset.

Based on our experimental experience detailed in the following sections, we provide guidance for setting SEGA's hyperparameters to balance transferability, computational cost, and perturbation imperceptibility. For standard use cases aiming at a practical trade-off, we recommend $\sigma=10/255$ and $m=10$. If computational time is less constrained and higher transferability is desired, increasing $m$ to $20$ is advised. Regarding the perturbation imperceptibility, the filter threshold $\alpha=0.02$ typically yields adversarial examples with an SSIM around 0.9, while $\alpha=0.005$ generally produces examples with an SSIM near 0.85.

\subsection{Comparison in Transferability}
\label{sec:main_result}
In this section, we evaluate the transferability of SEGA in comparison with SOTA attack methods on the CLIVE dataset and KonIQ-10k datasets. Transferability is assessed using five metrics calculated between predicted scores before and after attacks: Mean Absolute Error (MAE), R robustness~\cite{2022_NIPS_Zhang_PAttack}, Spearman Rank Order Correlation Coefficient (SROCC), Pearson's Linear Correlation Coefficient (PLCC), and Kendall's Rank-Order Correlation Coefficient (KROCC). Higher MAE and lower values for the other metrics indicate better transferability. The main results are shown in Table~\ref{tab:main_result_clive} and Table~\ref{tab:main_result_koniq}.

\textbf{Under the best strategy evaluation,} as shown in Table~\ref{tab:main_result_clive}, SEGA outperforms other methods across nearly all metrics and target models on the CLIVE dataset, particularly in SROCC, PLCC, and KROCC. In every case, SEGA achieves the lowest SROCC after attacks, indicating its superior ability to disrupt prediction consistency compared to other methods. For instance, when attacking DBCNN, SEGA achieves an SROCC of 0.56, whereas the second-best method (OUAP) achieves only around 0.75. In contrast, the other methods exhibit weak attack performance across most metrics. When attacking DBCNN, the MAE values for the compared methods except OUAP are all below 10, and their SROCC values remain above 0.85, highlighting their poor transferability from source models to the target model. When the target model is LinearityIQA or LIQE, SEGA achieves the best attack performance in terms of both MAE and R robustness, demonstrating its strong capability to manipulate prediction accuracy.

As shown in Table~\ref{tab:main_result_koniq}, SEGA consistently achieves either the best or second-best performance across most evaluation settings. For example, when attacking the DBCNN model, SEGA reduces the SROCC to 0.584 and the PLCC to 0.606, significantly lower than the best-performing baseline (FGSM: SROCC 0.737, PLCC 0.747). In addition, SEGA attains an MAE of 14.111, notably surpassing all compared methods (e.g., OUAP: 7.281), indicating a substantially greater deviation from the original predictions and thus stronger transferability.

While SEGA generally achieves strong performance across both tables, there are a few exceptions where it does not outperform existing methods, particularly when compared to OUAP in certain scenarios.
We attribute this to the relatively noticeable perturbations produced by OUAP, as discussed in the next section. Additionally, while SEGA’s perturbation filtering enhances imperceptibility by removing certain noise, it may slightly reduce attack effectiveness.

\textbf{Under the average strategy evaluation,} ensembling adversarial perturbations does not improve the transferability of the compared methods on both datasets. In most cases, SROCC remains above 0.9, reflecting limited transferability and highlighting the effectiveness of SEGA’s ensemble of Gaussian-smoothed gradients.
SEGA consistently surpasses all baseline methods across nearly all scenarios. For example, when attacking DBCNN on the CLIVE dataset, SEGA is the only method achieving an R robustness below 1. In contrast, all other methods yield SROCC values above 0.85, revealing their limited capability to disrupt prediction consistency. SEGA, however, achieves an SROCC of 0.5622, demonstrating its strong ability to degrade the target DBCNN model’s predictions. A similar trend is observed on the KonIQ-10k dataset. For instance, when attacking HyperIQA, the average strategy method yields an MAE value less than 5 for all compared methods, whereas SEGA achieves an MAE exceeding 11, highlighting its superior effectiveness.

The experimental results highlight the superiority of SEGA when the compared methods employ the average strategy. This demonstrates the effectiveness of combining Gaussian smoothing, gradient ensembling, and perturbation filtering in SEGA to achieve enhanced transferability.

From Table~\ref{tab:main_result_clive} and Table~\ref{tab:main_result_koniq}, \textbf{we further observe the effectiveness of SEGA in cross-architecture scenarios}. In particular, when attacking the transformer-based LIQE model~\cite{dosovitskiy2020vit} using CNN-based source models, all compared methods suffer a significant drop in transferability, especially in rank-based metrics such as SROCC, PLCC, and KROCC.

This phenomenon is particularly evident for OUAP, whose performance varies across different target architectures. Since OUAP leverages information from all test images to construct a universal perturbation, it exhibits relatively stable transferability among target models with CNN-based architectures. However, the resulting architecture-agnostic perturbation, optimized on CNN source models, may not align well with the input sensitivity and representation characteristics of transformer-based models, leading to unstable and degraded transfer performance. We further support this explanation by measuring the gradient alignment between CNN source models and LIQE in the supplementary material.

Nevertheless, SEGA remains the most effective method, achieving a KROCC of 0.663 on the CLIVE dataset and 0.622 on the KonIQ-10k dataset, while the second-best method achieves only 0.752 and 0.636, respectively. This demonstrates SEGA’s strong ability to transfer adversarial examples from CNN-based models to transformer-based architectures.

\begin{figure*}[!t]
    \centering
    \includegraphics[width=0.98\textwidth]{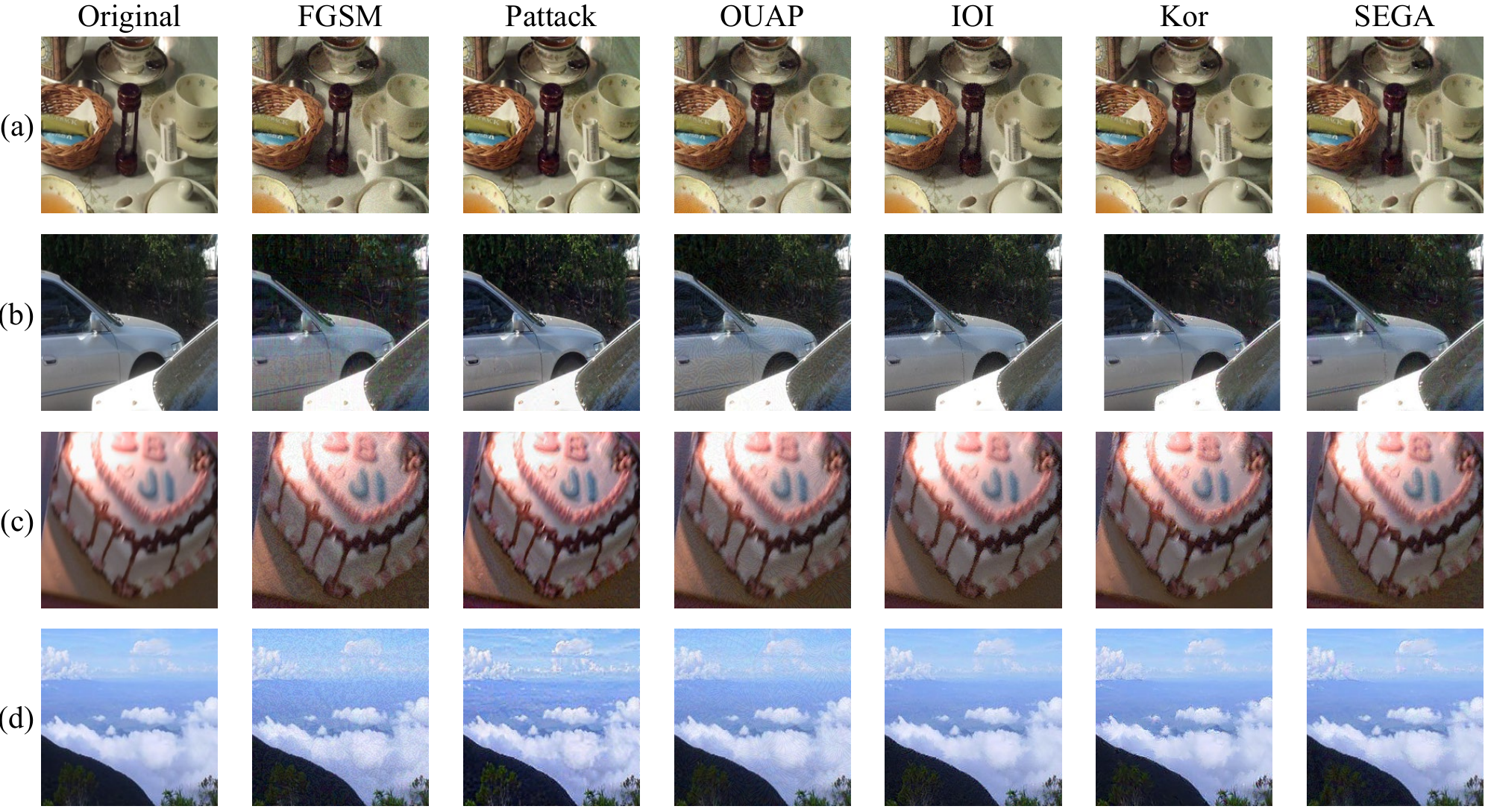}
    \caption{Visualization of adversarial examples generated by different attack methods against different target models on the CLIVE dataset, (a) HyperIQA, (b) DBCNN, (c) LinearityIQA, (d) LIQE. Please zoom in for a clearer view.}
    \label{fig:visualization}
\end{figure*}

\subsection{Imperceptibility of Adversarial Perturbations}
\label{sec:exp_quality}

Since the imperceptibility of adversarial perturbations is a crucial requirement for attacks, we provide both quantitative and qualitative comparisons in this section. As the compared methods perform better under their respective best strategy than under the average strategy (as shown in Table~\ref{tab:main_result_clive}), we evaluate the image quality of their adversarial examples using the best strategy.

The average SSIM~\cite{2004_TIP_Wang_SSIM} values between adversarial examples and the original input images are reported in Table~\ref{tab:image_SSIM}. Higher SSIM values indicate greater similarity between adversarial examples and the original images, suggesting that the adversarial perturbations are less perceptible.
From Table~\ref{tab:image_SSIM}, we observe that SEGA demonstrates significantly better imperceptibility compared to FGSM, Pattack, and OUAP on both CLIVE and KonIQ-10k datasets. Furthermore, SEGA achieves imperceptibility comparable to that of IOI and the Kor attack across all target models. For instance, when attacking HyperIQA on the CLIVE dataset, the SSIM value of SEGA is only 0.02 lower than that of IOI but exceeds those of FGSM and Pattack by more than 0.12.

To provide a more intuitive comparison of the imperceptibility of these attacks, we present some visualization results in Fig.~\ref{fig:visualization}. From Fig.~\ref{fig:visualization}, it is evident that adversarial perturbations in the examples generated by SEGA are less perceptible compared to those generated by FGSM and OUAP, particularly in images with extensive low-frequency regions, such as Fig.~\ref{fig:visualization} (d).
Although SEGA falls slightly behind IOI and the Kor attack in SSIM values, the adversarial examples generated by these methods appear perceptually similar to the human visual system.

\begin{table}[!t]
\caption{Ablation studies on the impact of Gaussian smoothing and gradient ensembling on transferability}
    \centering
    \setlength{\tabcolsep}{3.5pt}
    \resizebox{0.98\linewidth}{!}{
    \begin{tabular}{llccccc}
    \toprule
         Gauss. & Ensemb. & MAE$\uparrow$ & R$\downarrow$ & SROCC$\downarrow$ & PLCC$\downarrow$ & KROCC$\downarrow$ \\ \midrule
        $\times$ & $\times$ & 7.0034 & 1.1186 & 0.8357 & 0.8487 & 0.6553 \\ 
        $\times$ & \checkmark & 7.8475 & 1.0485 & 0.7858 & 0.8024 & 0.6036 \\ 
        \checkmark & $\times$ & 9.1340 & 0.9757 & 0.6698 & 0.7296 & 0.4998 \\ 
        \checkmark & \checkmark & \textbf{10.3865} & \textbf{0.8823} & \textbf{0.5714} & \textbf{0.6421} & \textbf{0.4233} \\ 
    \bottomrule
    \end{tabular}
    }
\label{tab:ab_overview}
\end{table}

\subsection{Ablation Studies}
\label{sec:exp_ablation}

In this section, we conduct ablation studies on the CLIVE dataset to evaluate the three components of SEGA: Gaussian smoothing, gradient ensembling, and perturbation filtering. First, we explore the effectiveness of the first two components on transferability as well as the parameters chosen for these parts. Then, we evaluate the impact of the perturbation filtering on imperceptibility and transferability. All experiments in this section attack the DBCNN as the target model.

\textbf{Effectiveness of Gaussian smoothing and gradient ensembling.} Table~\ref{tab:ab_overview} presents the ablation study on Gaussian smoothing and gradient ensembling. In the table, ``Gauss. ($\times$)" indicates that for a source model $f$, the gradient $\nabla f(x)$ is directly used to guide the generation of adversarial examples, instead of using the gradient of the Gaussian-smoothed function. And ``Ens. ($\times$)" denotes that only HyperIQA is used as the source model, instead of the full set of source models \{HyperIQA, LinearityIQA, LIQE\}. Table~\ref{tab:ab_overview} demonstrates that both components contribute to improving the transferability of adversarial examples, with their combination yielding the best results. Furthermore, Gaussian smoothing proves to be more effective than gradient ensembling in enhancing transferability. 

\begin{table*}[!t]
    \caption{Ablation studies on how perturbation filtering masks $M^\mathcal{F}$ and $M^\text{JND}$ affect both the imperceptibility of adversarial perturbations and the transferability of adversarial examples}
    \centering
    \setlength{\tabcolsep}{3.5pt}
    \resizebox{0.8\textwidth}{!}{
    \begin{tabular}{llccccccccc}
        \toprule
         & & \multicolumn{4}{c}{Imperceptibility} & \multicolumn{5}{c}{Transferability} \\ \cmidrule(lr){3-6} \cmidrule(lr){7-11}
            $M^\mathcal{F}$ & $M^\text{JND}$ &  $\ell_1 \downarrow$ & SSIM $\uparrow$ & LPIPS $\downarrow$& DISTS $\downarrow$
            & MAE $\uparrow$ & R $\downarrow$ & SROCC $\downarrow$ & PLCC $\downarrow$ & KROCC $\downarrow$ \\ \midrule
            $\times$ & $\times$ & 7.838 & 0.749 & 0.314 & 0.202 &  11.175 & 0.844 & 0.522 & 0.582 & 0.381 \\ 
            $\times$ & \checkmark & 5.511 & 0.808 & 0.290 & 0.188 & 12.048 & 0.839 & 0.629 & 0.632 & 0.450\\ 
            \checkmark & $\times$ & 6.501 & 0.817 & 0.261 & 0.181 & 12.749 & 0.847 & 0.684 & 0.684 & 0.497\\ 
            \checkmark & \checkmark & 4.538 & 0.857 & 0.235 & 0.167 & 10.493 & 0.876 & 0.562 & 0.626 & 0.418 \\ 
        \bottomrule
        \end{tabular}
    }
\label{tab:filter_quality_and_trans}
\end{table*}

\begin{figure}[!t]
    \centering
    \includegraphics[width=0.98\linewidth]{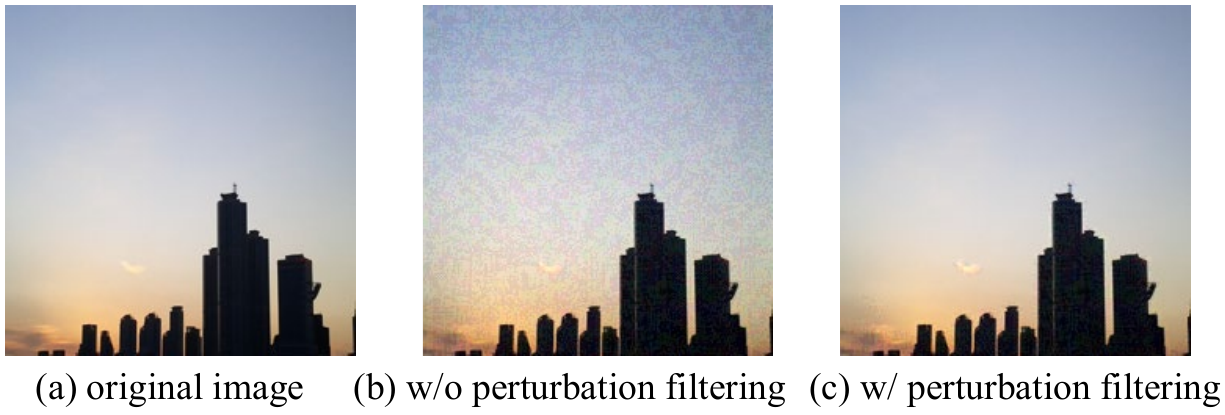}
    \caption{A visualization result of adversarial examples generated with and without the use of perturbation filtering.}
    \label{fig:visual_JND}
\end{figure}

\textbf{Effectiveness of perturbation filtering.}
Table~\ref{tab:filter_quality_and_trans} reports ablation results analyzing how the filter mask $M^\mathcal{F}$ and the JND mask $M^{\text{JND}}$ affect the imperceptibility and the transferability of adversarial perturbations under the $\ell_\infty$ norm constraint. Imperceptibility is evaluated by the $\ell_1$ norms of the perturbations, along with widely used perceptual metrics—SSIM, LPIPS, and DISTS~\cite{2020_TPAMI_Ding_DISTS}—calculated between the adversarial examples and the corresponding original images. The results shown in Table~\ref{tab:filter_quality_and_trans} demonstrate that both masks significantly enhance the imperceptibility of the perturbations. Moreover, Fig.~\ref{fig:visual_JND} provides a visual example illustrating their combined effect on perturbation imperceptibility.
The effect of perturbation filtering on transferability is presented in the right section of Table~\ref{tab:filter_quality_and_trans}. As the filtering process removes specific perturbation components, a slight reduction in transferability is observed when both masks are applied, compared to the unfiltered case. However, we consider this reduction acceptable given the substantial improvement in imperceptibility. For example, the robustness score R differs by approximately 0.03 with and without filtering.
These findings highlight the necessity of incorporating perturbation filtering in SEGA, which slightly reduces transferability but significantly enhances the image quality of adversarial examples.

\textbf{The smoothing parameter $\sigma$ in Gaussian smoothing.} Fig.~\ref{fig:ab_sigma_m_K} (a) illustrates the impact of the smoothing parameter $\sigma$ on transferability. The results indicate that transferability initially improves but then declines as $\sigma$ increases. For instance, the PLCC value decreases from approximately 0.7 to 0.6 as $\sigma$ increases from $5/255$ to $10/255$, and then rises to nearly 0.8 as $\sigma$ increases to $25/255$. Similar trends also occur on other metrics. This phenomenon suggests that $\sigma$ represents a trade-off between smoothness and transferability. On one hand, smaller $\sigma$ limits the effectiveness of noise removal and thus reduces transferability. On the other hand, a larger $\sigma$ results in broader smoothing of $f$, which may introduce large approximation errors. As a result, $f_\sigma$ may no longer serve as a good approximation of the target model. In our main experiments, we select $\sigma = 10 / 255$.

\begin{figure}[!t]
    \centering
    \includegraphics[width=0.98\linewidth]{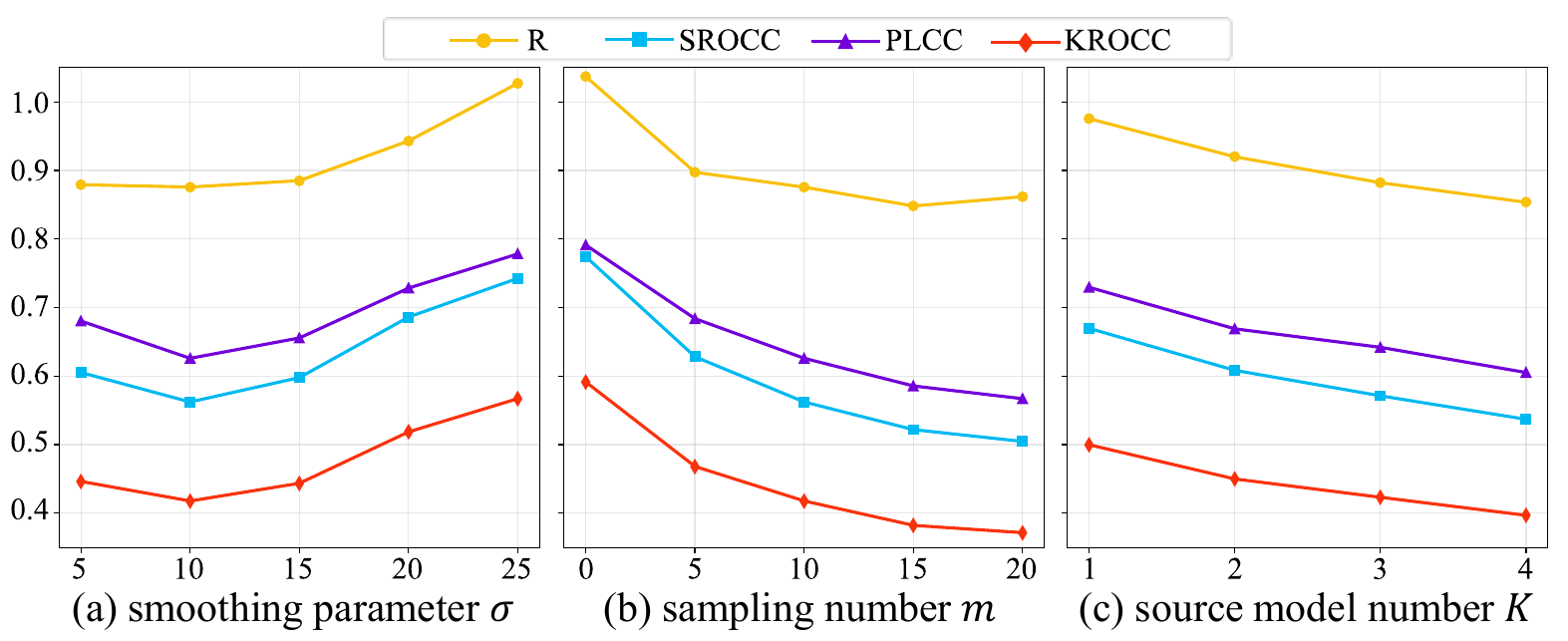}
    \caption{Analysis of hyperparameters through ablation studies and the impact of perturbation filtering on transferability.}
    \label{fig:ab_sigma_m_K}
\end{figure}

\textbf{The sampling number $m$ in Gaussian smoothing.} Since explicitly calculating the gradient of $f_\sigma$ is challenging, we instead use the expectation as shown in Eq.~\eqref{eq:smooth_gradient}. The number of sampling points, $m$, determines the accuracy of approximating $\nabla f_\sigma(x)$ by the empirical mean $\frac{1}{m}\sum_{i=1}^m \nabla f(x+\sigma u_i)$. The ablation study results for $m$ are presented in Fig.~\ref{fig:ab_sigma_m_K} (b). It is evident that transferability improves as $m$ increases. Notably, when $m$ reaches 20, the KROCC value drops below 0.4, indicating that larger $m$ effectively reduces noise in the gradients and enhances transferability. However, since a larger $m$ may lead to inefficiency due to the increased sampling complexity (as analyzed in Sec.~\ref{sec:complexity}), we select $m=10$ in this paper. Nevertheless, using a larger $m$, such as 20, could be a reasonable choice if better transferability is needed and some computational cost can be accommodated.

\textbf{The number $K$ of source models in gradient ensembling.} We evaluate how transferability changes as the number of source models increases. Specifically, we add MANIQA~\cite{2022_CVPRw_MANIQA} to the source model set and incrementally include NR-IQA models as source models following the order of HyperIQA, LinearityIQA, LIQE, and MANIQA, calculating the ensemble gradient using Eq.~\eqref{eq:ensemble_smooth_grad}. The results are presented in Fig.~\ref{fig:ab_sigma_m_K} (c). It is evident that incorporating more source models into the ensemble improves transferability. This finding suggests that using as many source models as possible can be beneficial when attacking unknown target models in real-world applications.

\subsection{Computational Complexity and Efficiency}
\label{sec:complexity}
In this subsection, we compare the computational complexity and efficiency of SEGA with other attack methods to demonstrate that SEGA is still an efficient attack method despite utilizing Gaussian sampling and ensembled source models. The computational complexity is assessed based on the number of forward propagations required to generate a single adversarial example. The computational efficiency is measured by the average time to generate an adversarial example. The reported runtime for OUAP includes the time required to generate/train the universal perturbations on the CLIVE test dataset, rather than just the time taken to apply the perturbation to generate an adversarial example. We believe this measurement is appropriate, as training the universal perturbation constitutes the core component of the OUAP method. Experiments are conducted on an NVIDIA GeForce RTX 2080 GPU with 11GB of memory.
The experimental results in Table~\ref{tab:computation} demonstrate that, although SEGA is not the most efficient attack method, it remains effective in generating adversarial examples, requiring only 1.3 seconds per example.

\begin{table}[!t]
    \caption{Comparison of the computational complexity (forward passes per image) and efficiency (generation time per adversarial example) of various attack methods}
    \centering
    \setlength{\tabcolsep}{4pt}
    \resizebox{0.9\linewidth}{!}{
    \begin{tabular}{lcccccc}
    \toprule
        Attack & FGSM & Pattack & OUAP & IOI & Kor & SEGA \\ \midrule
        Forward &  1 & 200 & 10 & 1 & 20 & 30 \\
        Time ($s$) & 0.09 & 9.34 & 0.43 & 0.04 & 18.68 & 1.31 \\ 
    \bottomrule
    \end{tabular}
    }
    \label{tab:computation}
\end{table}

\subsection{Comparison with Query-Based Attacks}
\label{appendix:exp_query}
Transfer-based attacks and query-based attacks are not directly comparable, as each has its own strengths. Transfer-based attacks are more time-efficient, while query-based attacks achieve better performance in black-box scenarios. This distinction has been demonstrated in several studies on attacks against image classification models~\cite{2017_Chen_ZOO_AISec}. Therefore, they are typically not compared directly within a single study on attacks against image classification models~\cite{2022_CompSec_Liu_LowFreTransAdv,2020_NIPS_Chen_AttackFeature}.

Despite this, we still compare our method, SEGA, with the IQA-specific query-based attack proposed by Yang~\etal~\cite{2024_TCSVT_SurFreeIQA} (denoted as SurFree-QA) and Ran~\etal~\cite{2025_ESWA_RandomQuery} to further demonstrate the effectiveness of SEGA. The results against DBCNN are shown in Table~\ref{tab:reuslt_SurFree}. As seen, SEGA outperforms SurFree-QA across all metrics, indicating that SEGA not only achieves better attack performance but is also more time-efficient. Specifically, the R robustness value drops below 0.9 when attacked by SEGA, whereas it remains around 1.4 when attacked by SurFree-QA. In addition to prediction accuracy, the robustness of prediction consistency follows a similar trend. SurFree-QA reduces the SROCC between predicted scores before and after the attack to 0.8397, while SEGA causes a more significant decrease, reducing it to 0.5622. 

In contrast, Ran's method is good at maximizing the absolute prediction error: it attains the highest MAE (41.15) and the lowest R value (0.359), indicating strong distortion of the prediction scores. This phenomenon is reasonable, as their approach iteratively adjusts perturbations to directly optimize the prediction error. However, the proposed method SEGA is most effective at perturbing rank‑based metrics and correlationships—achieving the lowest SROCC (0.562), PLCC (0.626), and KROCC (0.418)—which are the primary indicators of an NR‑IQA model’s ordering ability. 

Moreover, SEGA is very time efficient, requiring only 1.31 seconds to generate one adversarial example, while both query-based methods need more than 28 seconds.

\begin{table}[!t]
   \centering
   \caption{Comparison of SEGA with IQA-specific query-based attacks}
    \setlength{\tabcolsep}{2pt}
    \resizebox{0.98\linewidth}{!}{
    \begin{tabular}{lcccccc}
    \toprule
        ~ & MAE$\uparrow$ & R$\downarrow$ & SROCC$\downarrow$ & PLCC$\downarrow$ & KROCC$\downarrow$ & time(s)$\downarrow$ \\ \midrule
        SurFree-QA & 5.974 & 1.431 & 0.840 & 0.827 & 0.672 & 28.913  \\ 
        Ran~\etal & 41.151 & 0.359 & 0.682 & 0.639 & 0.493 & 29.935 \\
        SEGA & 10.493 & 0.876 & 0.562 & 0.626 & 0.418 & 1.310  \\ 
        \bottomrule
    \end{tabular}
}
\label{tab:reuslt_SurFree}
\end{table}

\section{Conclusion and Discussion}
\label{sec:discuss}

Previous studies have highlighted the difficulty of transferring adversarial examples across NR-IQA models. In this work, we address this challenge by proposing SEGA, a transfer-based black-box attack method. SEGA uses Gaussian smoothing to reduce noise in the source model gradients and ensembles the smoothed gradients of multiple source models to approximate the gradient of the target model. Besides, SEGA incorporates a perturbation filtering module to remove inappropriate perturbations, thereby improving the imperceptibility of adversarial perturbations.

This work takes the first step in exploring transfer-based black-box attacks on NR-IQA models, revealing their vulnerability in black-box scenarios.
It also highlights the critical role of smoothed gradient approximation in improving the transferability of adversarial examples.
This work is limited by the trade-off between transferability, perturbation imperceptibility, and computational efficiency, as discussed in Sec.~\ref{sec:experiment}. Enhancing computational efficiency while maintaining the strong transferability of SEGA will be the focus of our future work. 
Besides, while SEGA is more efficient than query-based attacks, it requires more forward propagations than simpler methods (e.g., FGSM). This higher computational demand may pose a practical limitation for time-sensitive applications. Therefore, enhancing the efficiency of SEGA while preserving its strong transferability will be a key focus of our future work.
 
\section*{Acknowledgments}
This research is partially funded by a grant from the National Natural Science Foundation of China (No. 62506009). We are also grateful to Ran Chen for her contributions to the preliminary experimental work.

\bibliographystyle{IEEEtran}
\bibliography{reference}

\newpage

\section{Biography Section}
 
\vspace{11pt}

\begin{IEEEbiography}
[{\includegraphics[width=1in,height=1.25in,clip,keepaspectratio]{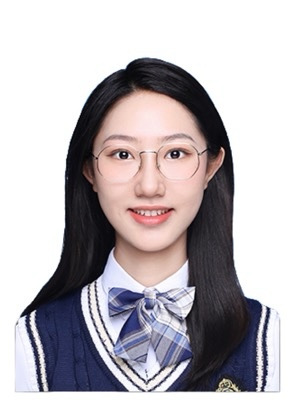}}]
{Yujia Liu}
received the B.S. degree in mathematics from Xiamen University, China, in 2018, and the Ph.D. degree in applied mathematics from Peking University, Beijing, China, in 2023. She is currently a Post-Doctoral Researcher in computer science with Peking University. Her main research interests include computer vision, adversarial attacks, and theoretical studies of adversarial attacks.
\end{IEEEbiography}

\begin{IEEEbiography}
[{\includegraphics[width=1in,height=1.25in,clip,keepaspectratio]{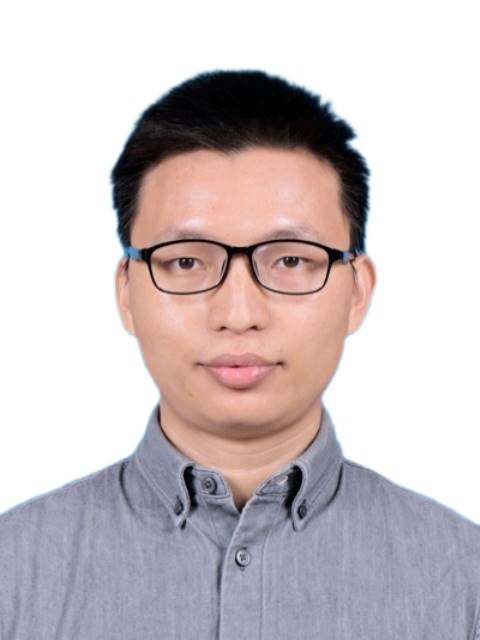}}]
{Dingquan Li}
(Member, IEEE) received dual B.S. degrees in electronic science and technology and applied mathematics from Nankai University, Tianjin, China, in 2015, and his Ph.D. degree in applied mathematics from Peking University, Beijing, China, in 2021. He did his Postdoc and is currently an associate researcher at Pengcheng Laboratory, Shenzhen, China. His research interests include multimedia processing and machine learning, especially quality assessment, data compression, and perceptual optimization. 
\end{IEEEbiography}

\begin{IEEEbiography}
[{\includegraphics[width=1in,height=1.25in,clip,keepaspectratio]{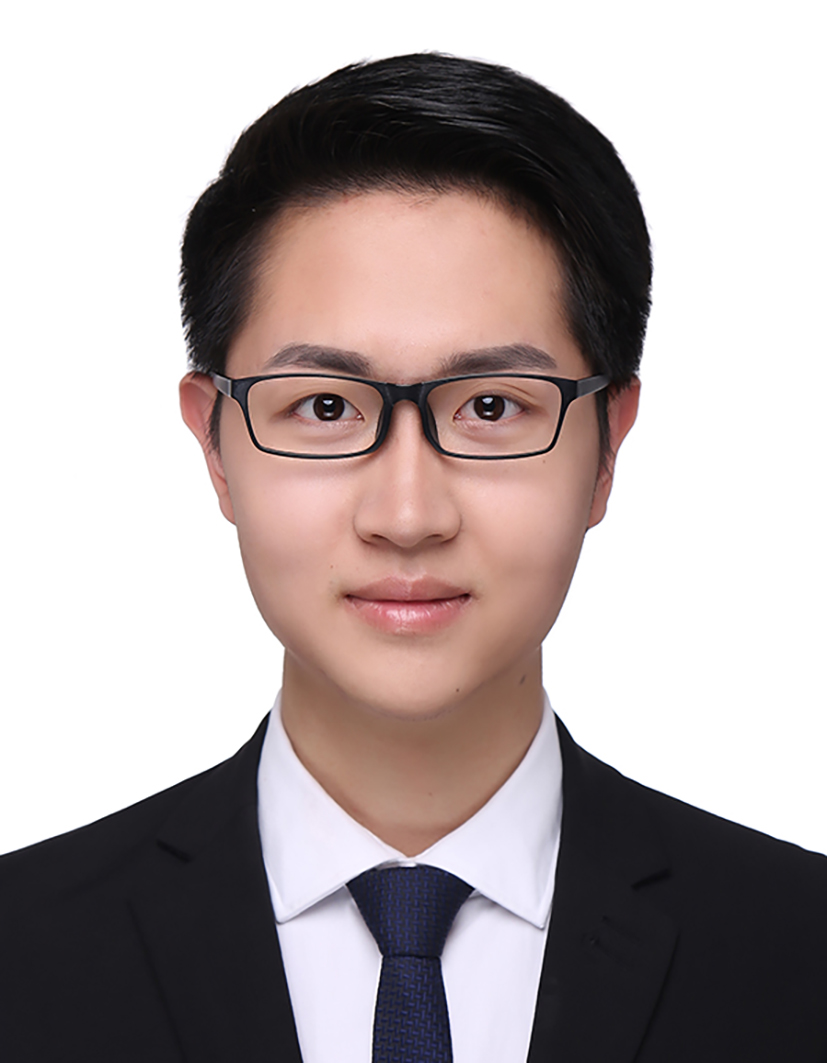}}]
{Zhixuan Li}
received the B.E. degree in computer science from Tianjin University, China, in 2018, and the Ph.D. degree from the National Engineering Research Center of Visual Technology (NERCVT), Peking University, Beijing, China, in 2023. He is currently a Research Fellow with the College of Computing and Data Science, Nanyang Technological University, Singapore. His research interests include computer vision and artificial intelligence, with a focus on complex occlusion scene understanding, vision–language reasoning, and climate forecasting.
\end{IEEEbiography}

\begin{IEEEbiography}
[{\includegraphics[width=1in,height=1.25in,clip,keepaspectratio]{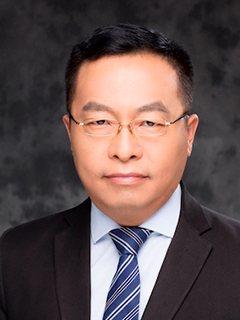}}]
{Tiejun Huang}
(Senior Member, IEEE) is currently a Professor with the School of Computer Science, Peking University, Beijing, China. He has authored or co-authored more than 300 peer-reviewed articles on leading journals and conferences, and he is also a coeditor of four ISO/IEC standards, five National standards, and four IEEE standards. He holds more than 50 granted patents. His research interests include visual information processing and neuromorphic computing. Prof. Huang is a fellow of CAAI and CCF. He was a recipient of the National Award for Science and Technology of China (Tier-2) for three times in 2010, 2012, and 2017. He is the Vice Chair of China National General Group on AI Standardization.
\end{IEEEbiography}

\vfill

\end{document}